\newtheorem{theorem}{Theorem}
\newtheorem{lemma}{Lemma}
\newtheorem{corollary}{Corollary}
\DeclareMathOperator*{\argmax}{arg\,max}
\title{Analysis of Baseline Evolutionary Algorithms for the Packing While Travelling Problem}
\author{
    Vahid Roostapour\thanks{Corresponding author}\\
    School of Computer Science\\
    The University of Adelaide\\
    Adelaide, SA, Australia\\
    \texttt{vahid.roostapour@adelaide.edu.au}\\
    \AND
    Mojgan Pourhassan\\
    School of Computer Science\\
    The University of Adelaide\\
    Adelaide, SA, Australia\\
    \texttt{mojgan.pourhassan@adelaide.edu.au}\\
   \And
    Frank Neumann\\
    School of Computer Science\\
    The University of Adelaide\\
    Adelaide, SA, Australia\\
    \texttt{frank.neumann@adelaide.edu.au}\\
}
\begin{document}
\maketitle

\begin{abstract}
The performance of base-line Evolutionary Algorithms (EAs) on combinatorial problems has been studied rigorously. 
From the theoretical viewpoint, the literature extensively investigates the linear problems, while the theoretical analysis of the non-linear problems is still far behind.
In this paper, variations of the Packing While Travelling (PWT) -- also known as the non-linear knapsack problem -- are studied as an attempt to analyse the behaviour of EAs on non-linear problems from theoretical perspective.
We investigate PWT for two cities and $n$ items with correlated weights and profits, using single-objective and multi-objective algorithms. Our results show that RLS\_swap, which differs from the classical RLS by having the ability to swap two bits in one iteration, finds the optimal solution in $O(n^3)$ expected time. We also study an enhanced version of GSEMO, which a specific selection operator to deal with exponential population size, and prove that it finds the Pareto front in the same asymptotic expected time. In the case of uniform weights, (1+1)~EA is able to find the optimal solution in expected time $O(n^2\log{(\max\{n,p_{\max}\})})$, where $p_{\max}$ is the largest profit of the given items. We also perform an experimental analysis to complement our theoretical investigations and provide additional insights into the runtime behavior.
\end{abstract}

\keywords{Evolutionary algorithms \and Packing while travelling problem\and Runtime analysis\and (1+1)~EA}

\section{Introduction}
Evolutionary algorithms (EAs) are general problem solvers which have been widely applied to many combinatorial optimisation and engineering problems~\cite{DBLP:books/sp/chiong12}. 
Similar to other bio-inspired algorithms, EAs aim to improve the current solution iteratively by imitating the evolution process in nature.

Random decision making plays an essential role in EAs' basic operations, such as selection and mutation. The area of runtime analysis aims to theoretically investigate the performance of EAs as a specific class of randomised algorithms and many significant contributions have been achieved along this 20 year old path~\cite{DBLP:journals/gpem/Popovici14,DBLP:series/ncs/Jansen13,neumann2010bioinspired}. Remarkable attention has been devoted to the baseline evolutionary algorithms, such as RLS, (1+1)~EA and GSEMO, since they can present a general insight into the performance of more complicated EAs. Interestingly, it has been proven that they perform efficiently in many cases~\cite{droste2002analysis,neumann2010bioinspired}. These analyses help to understand the performance of more complicated algorithms facing realistic environments.

Following Droste et al., who considered the performance of (1+1)~EA on a linear class of problems in 1998~\cite{DBLP:journals/ec/DrosteJW98}, many studies also considered linear functions~\cite{DBLP:journals/tcs/DoerrK15, Witt:2018:DCW:3205455.3205581}. Friedrich et al. proved that (1+1)~EA finds the optimal solution for any linear pseudo-boolean function under the uniform constraint in $O(n^2\log{(B\cdot p_{\max})})$, where $B$ is the constraint and $p_{\max}$ is the maximum profit of the given items~\cite{friedrich2018analysis}. More recently, Neumann et al. improved this bound to $O(n^2\log{B})$ expected time and to $O(n^2\log{n})$ with high probability~\cite{FrankGECCO}. They also proved a tight bound of $\Theta(n^2)$ for RLS.

While a number of simple nonlinear problems, such as trap functions and LeadingOnes, have been deeply studied from theoretical and practical aspects~\cite{DBLP:conf/ppsn/AntipovD18,DBLP:journals/tec/NijssenB03,giessen2016robustness}, the literature is not equally rich for investigations of EAs on non-linear functions. Specifically, multi-component problems have gained a lot of attention during recent years due to their appearance in many real-world applications~\cite{Bonyadi2019}. Multi-component problems consist of different components integrating with each other in a way that it is essential to deal with all the components simultaneously to find good quality solutions. Examples include the Generalised Minimum Spanning Tree problem (GMST) and the Generalised Travelling Salesman Problem (GTSP) in which clusters of nodes exist and the problem is to chose one node from each cluster and solve the MST and TSP on the selected nodes, respectively. These problems can be decomposed into a node selection component and a MST/TSP component, and hierarchical approaches for solving them are investigated theoretically in~\cite{corus2016parameterised} and~\cite{pourhassan2018theoretical}.

The Travelling Thief Problem (TTP), introduced by Bonyadi et al., is another example of an $\mathcal{NP}$-hard multi-component problem that combines the travelling salesman problem and the knapsack problem as the components~\cite{DBLP:conf/cec/BonyadiMB13}. In this problem, there are a number of items distributed between the nodes of a graph and the goal is to find a Hamiltonian cycle and choose the items from the visited nodes such that the benefit function is maximised. The integration of these components in a non-linear objective function is done in a way that optimising one component does not necessarily result in a near optimal solution. Using the same formulation, Polyakovskiy and Neumann introduced the Packing While Travelling problem (PWT), which is actually the packing version of TTP with a fixed travelling path~\cite{DBLP:journals/eor/PolyakovskiyN17}. A vehicle with a specified capacity wants to visit all the nodes in a specific order, pick the items such that the benefit function is maximised, and the capacity constraint is not violated. The benefit function has a direct relationship with the profit of packed items while the relationship with velocity is inverse. The velocity between each of the two cities is determined by the weight of the vehicle such that the more weight the vehicle carries, the less velocity it has. They prove that this problem, for even two cities, is also $\mathcal{NP}$-hard. Later, in~\cite{DBLP:conf/algocloud/NeumannPSSW18}, Neumann et al. presented an exact dynamic programming approach for PWT. They also proved that there is no polynomial time algorithm with constant approximation ratio for PWT problem unless $\mathcal{P}=\mathcal{NP}$ and presented an FPTAS for maximising the objective value over the baseline travel cost in which the vehicle travels the path empty. 

In this paper, we theoretically analyse the performance of three baseline evolutionary algorithms, RLS\_swap, (1+1)~EA and GSEMO, on variations of the packing while travelling problem and prove upper bounds for the expected running times. We prove that for the instances with correlated weights and profits, RLS\_swap and GSEMO find the optimal solution in expected time $O(n^3)$. Furthermore, we consider the instances with uniform weights and prove that (1+1)~EA finds the optimal solution in expected time $O(n^2\log(\max\{n,p_{\max}\})$. We also investigate the performance of these algorithms in addition to two other multi-objective algorithms, which are introduced in section~\ref{sec:practical}, from the experimental point of view.

The rest of the paper is organised as follows. Section~\ref{sec:priliminaries} presents the detailed definition of considered PWT problem and the algorithms used in this study. In Section~\ref{sec:theoretical}, we investigate the problem theoretically for correlated weights and uniform weights in Sections~\ref{sub:Correlated} and~\ref{sub:uniform}, respectively. Our experimental analyses are presented in Section~\ref{sec:practical}, followed by a conclusion in Section~\ref{sec:conclusion}.

\section{Preliminaries}\label{sec:priliminaries}
In this section we present the definition of Packing While Travelling problem and the details of algorithms we analyse in this paper.

\subsection{Problem Definition}
The general PWT problem proposed in 2016 by Polyakovskiy and Neumann can be seen as a TTP with a fixed path~\cite{DBLP:conf/cpaior/PolyakovskiyN15}. Given a set of $m+1$ ordered cities, distances $d_i$ from city $i$ to $i+1$ ($1\leq i\leq m$) and a set of items $N=\cup_{i=1}^m{N_i}$ distributed over first $m$ cities such that city $1\leq i\leq m$ contains items $N_i$. Let $|N_i|=n_i$ denote the number of items in city $i$. Positive integers profit $p_{ij}$ and weight $w_{ij}$ are assigned to each item $e_{ij} \in N_i$, $1\leq j\leq n_i$.
Path $M = (1,2,\dots,m+1)$ is travelled by a vehicle with velocity $v=[v_{\min},v_{\max}]$ and capacity $C$. A solution vector $s=(x_{11}x_{12}\dots x_{1{n_1}}\dots x_{m{n_m}})$ represents a set of selected items $S\subseteq N$ such that variable $x_{ij}\in\{0,1\}$ indicates whether item $e_{ij}$ is selected or not. Let $W(s)$ denote the sum of weights of items in $s$. As such, $s$ is feasible if $W(s)\leq C$. Let $$P(s) = \sum_{i=1}^m{\sum_{j=1}^{n_i}{p_{ij}x_{ij}}}$$ be the total profit and $$T(s)=\sum_{i=1}^m\frac{d_i}{v_{\max}-\nu\sum_{k=1}^i\sum_{j=1}^{n_k}w_{kj}x_{kj}}\text{,}$$ where $\nu = \frac{v_{\max}-v_{\min}}{C}$ is a constant, be the total travel time for vehicle carrying items of $s$. The denominator of $T(s)$ is such that picking an item in city $i$ only affects the time to travel from city $i$ to the end. Thus, the benefit value of $s$ is computed as$$B(s)=P(s)-R\cdot T(s)\text{,}$$ where $R$ is a given renting rate.
The aim is to find a feasible solution $s^*=\max_{s\in\{0,1\}^n}B(s)$.

The effect of different values for $R$ on the benefit function has been considered previously~\cite{DBLP:conf/gecco/WuPN16}. Generally speaking, $R=0$ changes PWT to the 0-1 knapsack problem while a larger $R$ forces the optimal solution to pick fewer items. 
This problem is proven to be $\mathcal{NP}$-hard by reducing the subset sum problem to the decision variant of unconstrained PWT~\cite{DBLP:journals/eor/PolyakovskiyN17}.

In this paper we consider another version of this problem where there are only two cities and $n$ items that are located in the first city. In addition, the weights of the given items are favourably correlated with the profits, i.e. for any two item $e_i$ and $e_j$, $p_i> p_j$ implies $w_i< w_j$. Hence, for a bit string solution $s=(x_1\cdots x_n)\in \{0,1\}^n$, we have 

\begin{align}\nonumber
B(s) = \sum_{i=1}^{n}{x_i\cdot p_i} - \frac{Rd}{v_{\max}-\nu \sum_{i=1}^n{x_i\cdot w_i}}\text{,}
\end{align}
where $d$ is the distance between the two cities.

We assume there are no items with the exact same weight and profit, however, all the proofs can be extended to include these cases and achieve the same results. Moreover, without loss of generality, in the rest of the paper we assume that items are indexed in a way that $p_1\geq p_2\cdots \geq p_n$ and $w_1\leq w_2\cdots \leq w_n$.
We also consider another version of the problem in which all the weights are uniform and equal to one.
\subsection{Algorithms}
In this paper we study the behaviour of three algorithms. The first one, described in Algorithm~\ref{alg:RLSswap}, is a Random Local Search (RLS) variant called RLS\_swap, which is able to do a swap (flip a zero bit and a one bit simultaneously), in addition to the usual one-bit flip. This modification of the classical RLS has been previously considered for the MST problem in~\cite{DBLP:journals/tcs/NeumannW07}. In each iteration, if the current solution is all zeros or ones, it flips a randomly chosen bit of the solution. Otherwise, with probability of $1/2$, it either does a one-bit flip as described or chooses a one and a zero uniformly at random and flips both of them. The generated offspring replaces the current solution if it is at least as good as its parent with respect to the fitness function. This swap mutation is added in RLS\_swap because there are some situations when optimising PWT in which no one-bit flip is able to pass the local optima. 
\begin{algorithm}[t]
	Choose $s\in\{0,1\}^n$ uniformly at random\;
	Let $|s|_1$ denote the number of items in $s$\;
    \While {stopping criterion not met}{
    $p\leftarrow$ a random real number in $[0,1]$\;
    \eIf{$|s|_1=0\lor|s|_1=n\lor p<1/2$}{
    Create $s^\prime$ by flipping a randomly chosen bit of $s$ \;
    }{
    
    Create $s^\prime$ by flipping a randomly chosen zero bit and a randomly chosen one bit of $s$ \;
    }

    \If{$F(s^\prime) \geq F(s)$}{$s \leftarrow s^\prime$\;}
    }
    \caption{RLS\_swap}\label{alg:RLSswap}
    
\end{algorithm}

Another algorithm we consider is (1+1)~EA (Algorithm~\ref{alg:1+1ea}). This algorithm flips each bit of the current solution with probability of $1/n$ in each mutation step. Similar to RLS\_swap, it compares the parent and the offspring and picks the better one for the next generation.

\begin{algorithm}[t]
	Choose $s\in\{0,1\}^n$ uniformly at random\;
    \While {stopping criterion not met}{
    Create $s^\prime$ by flipping each bit of $s$ independently with probability of ${1/n}$\;
    \If{$F(s^\prime) \geq F(s)$}{$s \leftarrow s^\prime$\;}
    }
    \caption{(1+1)~EA}\label{alg:1+1ea}
    
\end{algorithm}
In RLS\_swap and (1+1)~EA, which are single-objective algorithms, the comparisons between solutions is based on the fitness function $$F(s) = (q(s),B(s))$$ where $q(s) = \min\{C-w(s),0\}$. According to $q(s)$, $s$ is infeasible if and only if $q(s)<0$ and the absolute value of $q(s)$ denotes the amount of constraint violation. The goal is to maximise $F(s)$ with respect to lexicographical order, i.e. $s_1$ is better than $s_2$ ($F(s_1)\geq F(s_2)$) if and only if $\left(q(s_1)>q(s_2)\right) \lor \left(q(s_1)=q(s_2)\land B(s_1)\geq B(s_2)\right) $. This implies that any feasible solution has better fitness than any infeasible solution. Moreover, between two infeasible solutions, the one with smaller constraint violation is better. 

We also consider PWT with a multi-objective algorithm using a variant of GSEMO, which uses a specific selection function to deal with the exponential size of the population (Algorithm~\ref{alg:GSEMO}). Neumann and Sutton suggested this version of GSEMO for the Knapsack Problem (KP) with correlated weights and profit to avoid an exponential population size~\cite{DBLP:conf/ppsn/0001S18}. We use the same approach since PWT easily changes to KP when $R=0$. As the objectives, we use the weight function ($W(s)$) and the previously defined fitness function ($F(s)$). The aim is to minimise $W(s)$ while maximising $F(s)$. Between two solutions $s_1$ and $s_2$, we say $s_1$ (weakly) dominates $s_2$, denoted by $s_1\succeq s_2$, if and only if $W(s_1)\leq W(s_2) \land F(s_1)\geq F(s_2)$. The dominance is called strong, denoted by $s_1\succ s_2$, when at least one of the inequalities strictly holds. Note that based on this definition, similar to the single-objective fitness function, each feasible solution dominates all infeasible solutions and an infeasible solution closer to the constraint bound dominates the more distant ones. 
\begin{algorithm}[t]
	Choose $s\in\{0,1\}^n$ uniformly at random\;
	$P\leftarrow\{s\}$\;
    \While {stopping criterion not met}{
    Let $P_i=\{s\in P\mid |s|_1 = i\},0\leq i\leq n$ and $I = \{i\mid P_i\neq\emptyset\}$\;
    Choose $j\in I$ uniformly at random\;
    $s\leftarrow \argmax\{B(x)\mid x\in P_j\}$\;
    Create $s^\prime$ by flipping each bit of $s$ independently with probability of ${1/n}$\;
    \If{$\{ z\in P :  z\succ s^\prime\} = \emptyset$}{
    $P \leftarrow P\setminus \{z\in P\mid s^\prime\succeq z\} \cup \{s^\prime\}$\;}
    }
    \caption{GSEMO}\label{alg:GSEMO}
    
\end{algorithm}

\section{Theoretical Analysis}\label{sec:theoretical}

In this section, we theoretically investigate the performance of RLS\_swap, GSEMO and (1+1)~EA on different versions of the PWT problem by using the running time analysis. We first consider RLS\_swap and GSEMO on the PWT with correlated weights and profits. Next, the behaviour of (1+1)~EA on the PWT problem with uniform weights is analysed.

To study the PWT problem, we need to investigate the properties of an optimal solution and the impact of adding or removing an item on the benefit function. For this reason, in the following lemma we prove that the weight of the current solution, $w_i$, and $p_i$ determine if item $e_i$ is worth adding to or removing from the current solution.
\begin{lemma}\label{lem:uniq_weight}
	For each item $e_i$ there is a unique threshold $w_{e_i}$ such that adding $e_i$ to the current solution $s$ improves the benefit function if and only if $W(s)< w_{e_i}$. Moreover, $W(s)>w_{e_i}+w_i$ if and only if removing $e_i$ increases the benefit function.
\end{lemma}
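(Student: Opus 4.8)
The plan is to analyse the benefit change caused by a single item and reduce both claims to the strict monotonicity of one function of the current weight. Since there are only two cities and every item lies in the first city, the benefit depends on $s$ only through $P(s)$ and $W(s)$, and we may write $B(s) = P(s) - Rd/(v_{\max} - \nu W(s))$. First I would fix an item $e_i$ and a solution $s$ with $e_i \notin s$ and weight $W = W(s)$. Adding $e_i$ yields profit $P(s)+p_i$ and weight $W+w_i$, so the benefit gain is
$$g_i(W) := p_i - Rd\left(\frac{1}{v_{\max}-\nu(W+w_i)} - \frac{1}{v_{\max}-\nu W}\right) = p_i - \frac{Rd\,\nu w_i}{(v_{\max}-\nu W)(v_{\max}-\nu(W+w_i))}\text{,}$$
and adding $e_i$ improves the benefit exactly when $g_i(W)>0$.

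The key step is to show that $g_i$ is strictly decreasing over the feasible range of $W$. Because $\nu = (v_{\max}-v_{\min})/C$, any feasible weight $W\le C$ keeps $v_{\max}-\nu W \ge v_{\min} > 0$, so both factors in the denominator are positive and strictly decreasing in $W$; hence their product strictly decreases, the subtracted cost term strictly increases, and $g_i'(W)<0$. A continuous, strictly decreasing function has at most one zero, so there is a unique threshold $w_{e_i}$ with $g_i(W)>0 \iff W < w_{e_i}$. This is precisely the first claim, and the strictness is what guarantees both existence and uniqueness of the threshold.

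For the second claim I would exploit that removal is the reverse of addition. If $e_i\in s$ and $W=W(s)$, the state without $e_i$ has weight $W-w_i$, and removing $e_i$ simply undoes the corresponding addition, so the benefit change from removal equals $-g_i(W-w_i)$. Hence removing $e_i$ increases the benefit iff $g_i(W-w_i)<0$, which by the strict monotonicity already established is equivalent to $W-w_i > w_{e_i}$, i.e. $W > w_{e_i}+w_i$, exactly as stated. Note the favourable correlation of weights and profits is not needed here; the lemma is a general fact about a single add/remove move.

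The main obstacle is the monotonicity argument: one must verify that the denominator factors stay strictly positive throughout the feasible weight range (ensured by $v_{\max}-\nu C = v_{\min}>0$) so that $g_i$ is well defined and genuinely strictly monotone. Once this is in place, everything else is bookkeeping, and the add/remove duality delivers the second statement with essentially no extra work.
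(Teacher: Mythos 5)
Your proof is correct, and it starts from the same computation as the paper: both write the gain from adding $e_i$ as $p_i - Rd\nu w_i/\bigl((v_{\max}-\nu W)(v_{\max}-\nu(W+w_i))\bigr)$. Where you diverge is in how the threshold is extracted. The paper solves the resulting quadratic inequality explicitly, obtaining the closed form $w_{e_i} = \frac{v_{\max}}{\nu}-\frac{w_i}{2}\bigl(1+\sqrt{1+\frac{4Rd}{\nu w_i p_i}}\bigr)$, and then repeats an analogous calculation for removal, observing that the removal threshold equals $w_{e_i}+w_i$. You instead argue that the gain function $g_i(W)$ is strictly decreasing (so the threshold exists and is unique without ever being written down), and you obtain the removal condition for free from the duality ``removal at weight $W$ is the inverse of addition at weight $W-w_i$,'' which is cleaner and makes the $w_{e_i}+w_i$ shift transparent rather than a coincidence of algebra. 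The trade-off is that the paper's explicit formula is used immediately afterwards: the observation that $p_i\geq p_j \land w_i\leq w_j \implies w_{e_i}\geq w_{e_j}$ (and hence the ordering $w_{e_1}>\cdots>w_{e_n}$, on which the rest of the analysis rests) is read off directly from the closed form. Your route can recover this too, by noting that $g_i(W)\geq g_j(W)$ pointwise under that hypothesis, but you would need to say so explicitly. One small care point in your monotonicity step: positivity of the second denominator factor requires $W+w_i< v_{\max}/\nu$, not merely $W\leq C$; this holds in the regime where the comparison is actually made (the paper glosses over the same point), but it is the factor $v_{\max}-\nu(W+w_i)$, not $v_{\max}-\nu W$, that is the binding one.
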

\begin{proof}
	Assume that the current solution $s$ does not include item $e_i$ and $s^\prime = s \cup {e_i}$. Hence, we have
	\begin{align}\nonumber
	B(s^\prime)-B(s) &= (P(s^\prime)-RT(s^\prime))-(P(s)-RT(s))\\\nonumber
	& = p_i - Rd\left(\frac{1}{v_{\max}-\nu (W(s)+w_i)}-\frac{1}{v_{\max}-\nu W(s)}\right)\\
	& = p_i - \frac{Rd\nu w_i}{(v_{\max}-\nu (W(s)+w_i))\cdot(v_{\max}-\nu W(s))}\label{equ:w_e_i} \text{.}
	\end{align}
	Adding $e_i$ to $s$ increases $B(s)$ only if the value of Expression~\ref{equ:w_e_i} is greater than zero. Solving this equation we have
	\begin{align}\label{equ:w_e_i1}
	B(s^\prime)-B(s)\geq0 \iff W(s)\leq\overbrace{ \frac{v_{\max}}{\nu}-\frac{w_i}{2}\left(1+\sqrt{1+\frac{4Rd}{\nu w_i p_i}}\right)}^{\mathlarger{w_{e_i}}}\text{.}
	\end{align}
	By solving Equation~\ref{equ:w_e_i} for each $e_i\,\text{,}\; 1\leq i \leq n$, we can find $w_{e_i}$ such that adding $e_i$ to $s$ improves $B(s)$ if and only if $W(s)<w_{e_i}$. Considering the case that $e_i\in s$ and $s^\prime = s\setminus\{e_i\}$, a similar calculation implies that
	\begin{align}\label{equ:w_e_i2}
	B(s^\prime)-B(s)\geq0 \iff W(s)&\geq \frac{v_{\max}}{\nu}+\frac{w_i}{2}\left(1-\sqrt{1+\frac{4Rd}{\nu w_i p_i}}\right)\\\nonumber
	&\geq w_{e_i}+w_i\text{.}
	\end{align}
	In other words, removing $e_i$ from $s$ increases the benefit function if and only if $W(s)>w_{e_i}+w_i$, which completes the proof. 
\end{proof}
A direct result from Equations~\ref{equ:w_e_i1} and~\ref{equ:w_e_i2} is that
$$p_i\geq p_j \land w_i\leq w_j  \implies w_{e_i}\geq w_{e_j}\text{.}$$ Therefore, we have $w_{e_1}> w_{e_2}> \cdots> w_{e_n}$. Moreover, if the weight of solution $s$ equals $w_{e_i}$ then $B(s)=B(s\cup e_i)$. Note that for any item $e_i$ and $e_j$, $i<j$, if removing $e_i$ is beneficial, then it is the same for $e_j$ because of the correlated weights and profits. Hence, for any $1\leq i<j\leq n$, we also have
\begin{align}\label{equ:removing_weights}
    w_{e_i}+w_i> w_{e_j}+ w_j\text{.}
\end{align}

Now we discuss the optimal solution of the PWT problem. Let $s_i = <1^i,0^{n-i}>$ denote the solution that only includes the first $i$ items and $s_0 = <0^n>$. Consider the case that for some $i$, we have $w_{e_i}>W(s_{i-1})$. Hence, by Lemma~\ref{lem:uniq_weight}, adding $e_i$ to $s_{i-1}$ improves the benefit function and $B(s_i)>B(s_{i-1})$. Moreover, since $w_{e_1}\geq\cdots\geq w_{e_i}$, we have $B(s_i)\geq\cdots\geq B(s_1)$. The claim is that for a given set of items, there is a unique $k$ such that the optimal solution of the PWT problem is packing either $k$ or $k \land (k+1)$ items with the lowest indices. In the second case, $s_k$ and $s_{k+1}$ are both optimal and have the same benefit value. The following lemma proves this claim. 

\begin{lemma}\label{lem:optimalsolution}
	The optimal solution for the Packing While Travelling problem is the set of $k$ or $k \land (k+1)$ items with highest profits and lowest weights where $k$ is unique and depends on the given set of items.
\end{lemma}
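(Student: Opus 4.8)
The plan is to establish the lemma in two stages. First I would show that some optimal solution must be a \emph{prefix} of the item ordering, i.e.\ it equals $s_k=\langle 1^k,0^{n-k}\rangle$ for some $k$. Second, I would show that along the chain $s_0,s_1,\dots,s_n$ the benefit $B(s_i)$ is unimodal, so that its maximiser is either a single index $k$ or a pair of consecutive indices $k$ and $k+1$. Combining the two stages yields the claim.

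For the first stage I would use an exchange argument. Suppose an optimal solution $s^*$ is not a prefix. Since $|s^*|_1$ items are selected but they are not the $|s^*|_1$ smallest indices, there exist $i<j$ with $e_i\notin s^*$ and $e_j\in s^*$. Form $s'$ by removing $e_j$ and inserting $e_i$. Because the instance is favourably correlated we have $p_i\ge p_j$ and $w_i\le w_j$, so $P(s')\ge P(s^*)$ while $W(s')\le W(s^*)$; the weight inequality keeps the denominator of $T$ no smaller and hence $T(s')\le T(s^*)$. As $R\ge 0$, combining these gives $B(s')\ge B(s^*)$, and the weight inequality also preserves feasibility. Each such swap strictly decreases the potential $\sum_{e_\ell\in s}\ell$ by $j-i>0$, so after finitely many swaps one reaches a prefix solution $s_k$ with $B(s_k)\ge B(s^*)$, which is therefore also optimal.

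For the second stage I would analyse the one-step differences using Lemma~\ref{lem:uniq_weight}: $B(s_{i+1})\ge B(s_i)$ holds iff $W(s_i)\le w_{e_{i+1}}$, with equality exactly when $W(s_i)=w_{e_{i+1}}$. The weights $W(s_0)<W(s_1)<\cdots$ strictly increase (all $w_i>0$), while the thresholds satisfy $w_{e_1}>w_{e_2}>\cdots>w_{e_n}$. The key monotonicity claim is that once adding an item ceases to be profitable it never becomes profitable again: if $W(s_i)\ge w_{e_{i+1}}$, then
\begin{align}\nonumber
W(s_{i+1}) = W(s_i)+w_{i+1} > W(s_i) \ge w_{e_{i+1}} > w_{e_{i+2}}\text{,}
\end{align}
so $B(s_{i+2})<B(s_{i+1})$ strictly. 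Hence $B(s_0),\dots,B(s_n)$ strictly increases up to a peak and strictly decreases afterwards, and the only place an equality $B(s_i)=B(s_{i+1})$ can occur is at that peak (when $W(s_i)=w_{e_{i+1}}$). Therefore the maximum over prefixes is attained either at a unique $k$, or at exactly the two consecutive indices $k$ and $k+1$ when this boundary equality occurs, which together with the first stage proves the lemma.

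The main obstacle is the first stage: one must argue carefully that the exchange both weakly improves $B$ and terminates, and that it is not disturbed by the capacity constraint. The benefit improvement is clean because a swap simultaneously cannot decrease profit and cannot increase weight (and so cannot increase travel time), while feasibility is automatically maintained since the weight only drops. If the constraint truncates the prefix chain before the unconstrained peak, the benefit is still strictly increasing there, so the feasible optimum remains a unique prefix. The second stage is then essentially a direct consequence of the strict monotonicity of the thresholds $w_{e_i}$ already derived from Equations~\ref{equ:w_e_i1} and~\ref{equ:w_e_i2}.
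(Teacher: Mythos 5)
Your proof is correct and follows essentially the same route as the paper: unimodality of $B$ along the prefix chain $s_0,\dots,s_n$ via the thresholds $w_{e_i}$ from Lemma~\ref{lem:uniq_weight}, combined with the observation that a prefix solution is best among all solutions of the same cardinality, with the capacity constraint handled by truncating the chain. The only difference is that you spell out the exchange argument (and its termination via a potential function) for the ``best among equal-size solutions'' step, which the paper dispatches in a single sentence.
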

\begin{proof}
	First, we assume that $w_{e_n}> W(s_{n-1})$. In this case, we have $B(s_{n})>B(s_{n-1})$ and $s_n$ is the optimal solution. On the other hand, if $w_{e_1}<0$ then the optimal solution is $s_0$. In the rest of the proof, we assume that neither of these cases happen.
	
	Let $o=\min\{i\mid W(s_{i})>w_{e_{(i+1)}}, {0\leq i\leq n-1}\}$. According to the definition of $o$ and $w_{e_{o+1}}$, adding any item to $s_o$ decreases the benefit function. Moreover, we have $W(s_{o-1})<w_{e_o}$, which implies $W(s_{o-1})+w_o=W(s_o)<w_{e_o}+w_o$. Hence, removing any item from $s_o$ also reduces the benefit function. Therefore, the following equation holds:
	$$B(s_0)<\cdots<B(s_o)\geq B(s_{o+1})>\cdots>B(s_n)\text{.}$$ The equality $B(s_o)=B(s_{o+1})$ holds only when $w_{e_{o+1}}=W(s_o)$. However, all other inequalities are strict according to Lemma~\ref{lem:uniq_weight}.
	
	To prove that $s_o$ or $s_o \land s_{o+1}$ are the only optimums, it is now enough to show that for any $i$, $s_i$ has the highest benefit value among other solutions with $i$ items. This is also true since the items in $s_i$ have the highest profits and the lowest weights, which result in the highest benefit value.
	
	Involving the capacity constraint $C$, however, may change the optimal solution. We define $s_k$, $k=\max\{j\mid W(s_{j})\leq C, {0\leq j\leq o}\}$, the feasible solution with the highest benefit function, which is the actual optimal solution with respect to $C$. This finalises the proof. 
\end{proof}
From this point, we denote the optimal solution by $s^*$ and $k=|s^*|$ denotes the number of selected items in the optimal solution.

\subsection{Correlated Weights and Profits}\label{sub:Correlated}
In this section, we consider the instances of the PWT problem in which the weights are strongly correlated with the profits. We calculate the performance of RLS\_swap and GSEMO for this type of PWT.
\subsubsection{RLS\_swap}

Using the result of Lemma~\ref{lem:optimalsolution}, we analyse the performance of RLS\_swap finding the optimal solution of the PWT problem in terms of the number of evaluations. We refer to the first $k$ bits of $s^*$ as the first block and the rest as the second block. Let $l$ and $r$ denote the number of zeros in the first block and the number of ones in the second block, respectively. For technical reason, we assume item $e_0$ exists where $w_{e_0} > \sum_{i=1}^n w_i$ and $x_0 = 1$. We denote the solution achieved by RLS\_swap after $t$ generations as $s^t$. Consequently, we define $$h_t =\max\{0\leq i\leq k \mid x_0=x_1=\cdots= x_{i}=1 \land W(s^t)< w_{e_i}+w_i\}\text{,}$$ to be the index of a specific bit of $s^t$.
The following lemma and theorem consider the performance of RLS\_swap on the PWT problem with correlated weights.

\begin{lemma}\label{lem:2ndphase}
Having obtained a solution $s^t$, RLS\_swap does not accept a solution $s^\prime=<x^{\prime}_1,\cdots,x^{\prime}_n>$ in which $\exists i\leq h_t : x^{\prime}_i = 0$.
\end{lemma}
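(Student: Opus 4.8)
The plan is to exploit the fact that the bits protected by $h_t$ correspond to the items of highest profit and lowest weight, so that destroying any of them can only hurt the fitness. Since RLS\_swap produces an offspring $s'$ that differs from $s^t$ either by a single flipped bit or by one swap (a simultaneous $1\to 0$ and $0\to 1$ flip), and positions $1,\dots,h_t$ are all ones in $s^t$, the only way to obtain $x'_i=0$ for some $i\le h_t$ is to flip exactly that bit from one to zero. I would therefore split into two cases: (A) a single flip that removes $e_i$, giving $s'=s^t\setminus\{e_i\}$; and (B) a swap that removes $e_i$ and simultaneously inserts some $e_j$ with $j>h_t$ (the inserted bit must lie beyond the protected prefix, since all earlier positions are already ones).

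For the benefit comparison I would first record a uniform weight bound on the protected prefix. By the definition of $h_t$ we have $W(s^t)<w_{e_{h_t}}+w_{h_t}$, and since the removal thresholds are strictly decreasing by Equation~\ref{equ:removing_weights}, it follows that $W(s^t)<w_{e_i}+w_i$ for every $i\le h_t$. In case (A), Lemma~\ref{lem:uniq_weight} then gives that removing $e_i$ strictly decreases the benefit, so $B(s')<B(s^t)$; as the removal also lowers the total weight, feasibility is not worsened, so a feasible parent yields a feasible offspring with smaller benefit, which is rejected. In case (B), the correlation together with the index ordering gives $p_j<p_i$ and $w_j>w_i$ because $j>i$; hence the swap simultaneously decreases the profit term and increases the carried weight (raising the travel time), so $B(s')<B(s^t)$ strictly, while the weight increase means the offspring is no closer to feasibility than the parent. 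In every feasibility scenario this offspring is therefore rejected, so case (B) needs no further argument.

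The step I expect to be the main obstacle is reconciling case (A) with the constraint handling when the parent $s^t$ is infeasible. There, removing $e_i$ reduces the weight and hence strictly improves $q(s^t)$, which would make the offspring lexicographically better and force its acceptance even though $B$ has dropped. To close this gap I would argue that this configuration cannot actually produce a forbidden acceptance: the protected prefix consists of the $h_t$ lightest items, whose total weight $W(s_{h_t})\le W(s_k)\le C$ is feasible, so any infeasibility of $s^t$ is caused by ones in positions beyond $h_t$. I would then try to show, using the relation between the removal threshold $w_{e_i}+w_i$ and the capacity $C$ implied by Equations~\ref{equ:w_e_i1} and~\ref{equ:w_e_i2} together with Lemma~\ref{lem:optimalsolution}, that whenever $W(s^t)<w_{e_i}+w_i$ holds for a protected index the relevant removal does not strictly improve $q$ (or that an infeasible parent collapses $h_t$ so that no protected bit remains to remove). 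Establishing this implication cleanly, rather than only in the feasible regime, is the delicate part, and it is precisely what makes the single-flip case genuinely harder than the swap case.
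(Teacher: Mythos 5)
There is a genuine gap, and it is not the one you flagged. You treat $s'$ as the \emph{immediate} offspring of $s^t$, so your whole case analysis leans on the bound $W(s^t)<w_{e_i}+w_i$ that the definition of $h_t$ supplies at time $t$. But the lemma (and the way it is used in Theorem~\ref{thm:RLSswap}) is a statement about every later generation: bits $1,\dots,h_t$ must stay at one for the rest of the run. At a later time $t'>t$ the current solution may have accumulated further items $e_x$ with $x>h_t$, and if its weight ever reached $w_{e_i}+w_i$ for some protected index $i$, Lemma~\ref{lem:uniq_weight} would make the one-bit removal of $e_i$ \emph{beneficial} and hence accepted --- your case (A) would then fail. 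The heart of the paper's proof is exactly the invariant you are missing: it looks at the first generation $t'$ at which $W(s^{t'})\ge w_{e_{h_t}}+w_{h_t}$ and derives a contradiction, because the only accepted weight-increasing move is a one-bit insertion of some $e_x$ with $x>h_t$, which by Lemma~\ref{lem:uniq_weight} is accepted only if the resulting weight satisfies $W\le w_{e_x}+w_x$, and Inequality~\ref{equ:removing_weights} gives $w_{e_x}+w_x<w_{e_{h_t}}+w_{h_t}$. Hence the weight never crosses the removal threshold of any protected item, and no one-bit removal of such an item is ever beneficial. Without this propagation-over-time step, your argument only shows that the prefix survives a single generation under a hypothesis you have not shown is preserved.

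Your case (B) is fine and matches the paper's one-line dismissal of swap mutations (any swap touching a protected bit strictly lowers profit and raises weight, so it is rejected in every feasibility scenario). On the obstacle you do flag --- an infeasible parent, where removing $e_i$ improves $q$ and is accepted regardless of $B$ --- note that the lemma is only invoked after Phase~1 of Theorem~\ref{thm:RLSswap}, i.e., once a feasible solution exists; since every feasible solution beats every infeasible one under $F$, the current solution stays feasible from then on, so that scenario does not arise (the paper builds $C\ge W(s')$ into its contradiction hypothesis for the same reason). That part is repairable in a sentence; the missing weight invariant is not.
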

\begin{proof}
    Since the weights and profits are correlated, it is enough to prove that RLS\_swap does not remove $e_{h_t}$. Let $s^\prime$ denote the solution in generation $t'>t$ such that $W(s^\prime)\geq w_{e_{h_t}}+w_{h_t}$ for the first time after $t$, i.e. RLS\_swap is able to remove $e_{h_t}$ from $s^\prime$.
    Note that all the items $e_i$, $i\leq h_t$, are still in $s^\prime$ at time $t'$. Furthermore, no swap mutation can remove $e_{h_t}$ since it has a higher profit value and less weight than other missed items. Hence, there exists an item $e_x$, $x>h_t$, that has been added to $s^\prime$ with a one-bit flip in iteration $t'-1$ such that $W(s^\prime)- w_x< w_{e_{h_t}}+w_{h_t}$ , $C\geq W(s^\prime)\geq w_{e_{h_t}}+w_{h_t}$ and $B(s^\prime)\geq B(s^\prime \setminus e_x)\text{.}$ From the benefit inequality and Lemma~\ref{lem:uniq_weight}, we have $$W(s^{\prime})-w_x\leq w_{e_x} \Rightarrow W(s^\prime )\leq w_{e_x}+w_x\text{.}$$ Since $x>h_t$, according to Inequality~\ref{equ:removing_weights}, we have $w_{e_{h_t}}+w_{h_t}> w_{e_x}+ w_x$. Hence, $W(s')<w_{e_{h_t}}+w_{h_t}$ which contradicts with the assumption that it is beneficial to remove $e_{h_t}$ from $s'$ and completes the proof.
\end{proof}

According to the definition of $h_t$, the weight of the accepted solutions after generation $t$ is lower bounded by $W(s_{h_t})$, in which the first $h_t$ items are selected. This shows that the maximum possible value for $h_t$ is $k$, otherwise $s^*$ is not the optimal solution (Lemma~\ref{lem:optimalsolution}). Let 
\begin{align}\label{def:y}
y=\max\left\{h_t,\min\{ i\mid h_t< i\leq n\land  x_{i}=1\}\right\}\text{,}    
\end{align} be the index of the first one bit after a sequence of zeros after $h_t$. Let $y=h_t$ if there is no one bit after $h_t$. Similarly to the proof of Lemma~\ref{lem:2ndphase}, the following corollary holds.

\begin{corollary}\label{cor:extenstion}
     Obtaining a solution $s$ with $W(s)<w_{e_y}+w_y$, RLS\_swap does not remove $e_y$ except by reducing the value of $y$.
\end{corollary}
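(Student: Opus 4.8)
The plan is to reproduce the argument of Lemma~\ref{lem:2ndphase} with the item $e_y$ taking over the role previously played by $e_{h_t}$, the one new feature being that $e_y$ is separated from the locked prefix by a block of zeros, so that exactly those deletions of $e_y$ that fill this block are the ones which reduce $y$ and therefore must be allowed. First I would classify the moves of RLS\_swap that delete $e_y$. Since a step is either a one-bit flip or a swap, a deletion of $e_y$ is either (i) a one-bit flip that turns $e_y$ off, or (ii) a swap that turns $e_y$ off and simultaneously turns on a zero bit $e_j$. In case (ii) the chosen zero bit cannot have index $j \le h_t$, since by Lemma~\ref{lem:2ndphase} those bits are already one; if $h_t < j < y$ then, by the definition of $y$ in Equation~\ref{def:y}, $e_j$ was a zero bit that becomes the new first one-bit after $h_t$, so the move reduces $y$ and is permitted. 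It thus remains to exclude (i) and the subcase of (ii) with $j > y$. The latter I expect to dispatch at once, exactly as in the corresponding step of Lemma~\ref{lem:2ndphase}: $j > y$ forces $p_j \le p_y$ and $w_j \ge w_y$ by the correlation, so exchanging $e_y$ for $e_j$ both lowers the total profit and raises the total weight and hence strictly decreases $B$, so such an offspring is rejected.

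For the one-bit deletion (i) I would replay the weight-threshold contradiction. By Lemma~\ref{lem:uniq_weight} a deletion of $e_y$ fails to decrease $B$ only when $W(s) \ge w_{e_y}+w_y$, so it suffices to show that, as long as $y$ has not been reduced, the weight never reaches $w_{e_y}+w_y$. I would take the first iteration $t'$ after $s$ is obtained at which $C \ge W(s') \ge w_{e_y}+w_y$ holds; up to $t'$ every $e_i$ with $i \le h_t$ and the item $e_y$ are still present, and by the swap case above the threshold cannot have been crossed by a swap removing $e_y$, so the crossing must be due to a one-bit flip that inserted some $e_x$ with $x > y$ at step $t'-1$. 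Acceptance of that insertion gives $B(s') \ge B(s' \setminus e_x)$, hence $W(s') - w_x \le w_{e_x}$ by Lemma~\ref{lem:uniq_weight}, i.e. $W(s') \le w_{e_x}+w_x$; and since $x > y$, Inequality~\ref{equ:removing_weights} yields $w_{e_x}+w_x < w_{e_y}+w_y$, so $W(s') < w_{e_y}+w_y$, contradicting the choice of $t'$. Consequently (i) is never accepted, and $e_y$ can leave the solution only through a swap of type (ii) with $h_t < j < y$, that is, by reducing $y$.

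The step I expect to be the main obstacle is making the bookkeeping of ``while $y$ has not been reduced'' precise, since the quantity $y$ that we are reasoning about is itself changing along the run; the safe route is the minimal-counterexample framing inherited from Lemma~\ref{lem:2ndphase}, taking $t'$ to be the first iteration at which any forbidden event occurs, so that the induction hypothesis (prefix $i\le h_t$ intact, $e_y$ present, $y$ unchanged) is available just before the contradiction is derived. A secondary point I would check explicitly is the interaction with feasibility: I keep the condition $C \ge W(s')$ exactly as Lemma~\ref{lem:2ndphase} does, so that on a feasible solution a deletion of $e_y$ leaves $q(s)$ at $0$ and acceptance is governed by $B$ alone; one should confirm that the corollary is only applied to feasible solutions, where a one-bit deletion cannot slip through merely by repairing a constraint violation.
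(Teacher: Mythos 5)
Your proposal is correct and takes essentially the same route as the paper, which proves the corollary simply by asserting it follows ``similarly to the proof of Lemma~\ref{lem:2ndphase}'': you replay that lemma's minimal-counterexample weight-threshold argument with $e_y$ in place of $e_{h_t}$, and your extra case analysis of swaps (rejecting those with $j>y$ via the correlation, permitting those with $h_t<j<y$ as exactly the moves that reduce $y$) is the intended content of the ``except by reducing $y$'' clause. Your closing remark about tracking the invariant when $y$ changes is handled in the paper by the short paragraph immediately following the corollary, which verifies $W(s')<w_{e_{y'}}+w_{y'}$ for the new index $y'$.
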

Reducing the value of $y$ to $y'<y$ is either caused by swapping $e_y$ with item $e_{y'}$ or inserting $e_{y'}$ with a one-bit flip. In the first case, due to Inequality~\ref{equ:removing_weights}, and in the second case, due to Lemma~\ref{lem:uniq_weight}, we have $W(s')<w_{e_{y'}}+w_{y'}$, i. e. Corollary~\ref{cor:extenstion} holds for $y'$.
\begin{theorem}\label{thm:RLSswap}
RLS\_swap finds the optimal solution for the Packing While Travelling problem with correlated weights and profits in $O(n^3)$ expected time.
\end{theorem}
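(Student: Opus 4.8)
The plan is to use $h_t$ as a monotone progress measure and reduce the whole analysis to bounding the expected time for a single increment of $h_t$. By Lemma~\ref{lem:2ndphase} the locked prefix never shrinks, so $h_t$ is non-decreasing, and by construction $0\le h_t\le k$; hence $h_t$ can increase at most $k\le n$ times before the first block is completely packed as $1^k$. Combined with Lemma~\ref{lem:optimalsolution}, which tells us $s^*=1^k0^{n-k}$, the run then splits into driving $h_t$ up to $k$ and afterwards clearing the ones that remain in the second block.

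First I would show that, in every configuration with $s^t\ne s^*$, RLS\_swap has an accepted move of probability $\Omega(1/n^2)$ that makes progress, via a short case analysis organised around $h_t$ and the pointer $y$ from Equation~\ref{def:y}. If the invariant $W(s^t)<w_{e_y}+w_y$ fails then, by Lemma~\ref{lem:uniq_weight}, removing $e_y$ improves the benefit (or at least reduces the constraint violation when $s^t$ is infeasible), so a one-bit flip of probability $\Omega(1/n)$ is accepted; I repeat such removals until the invariant holds. Once the invariant holds and $h_t<k$, either there is no one past position $h_t$, so $s^t=s_{h_t}$ and, since $h_t<k$, Lemma~\ref{lem:optimalsolution} gives $W(s_{h_t})<w_{e_{h_t+1}}$, whence adding $e_{h_t+1}$ by a one-bit flip increases $h_t$; or $y>h_t+1$ and the key move is the swap exchanging $e_y$ with the zero at position $h_t+1$. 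This swap has probability $\Omega(1/n^2)$, lowers the weight and raises the profit (as $w_{h_t+1}\le w_y$ and $p_{h_t+1}\ge p_y$), hence is accepted, and I claim it forces $h_t$ to grow: from $W(s^t)<w_{e_y}+w_y$ we obtain $W(s^t)-w_y<w_{e_y}\le w_{e_{h_t+1}}$, so the post-swap weight $W(s^{t+1})=W(s^t)-w_y+w_{h_t+1}<w_{e_{h_t+1}}+w_{h_t+1}$, which together with $x_{h_t+1}=1$ is exactly the locking condition that increases $h_t$. When $h_t$ reaches $k$ the first block equals $1^k$, and the only remaining task is to remove each surviving second-block item $e_j$ ($j>k$) by a beneficial one-bit flip of probability $\Omega(1/n)$.

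The running time then follows by summing a waiting-time bound over the at most $n$ increments of $h_t$: at each step the relevant incrementing move exists with probability $\Omega(1/n^2)$, and since $h_t$ is monotone this move stays available until it occurs, so each increment costs $O(n^2)$ expected iterations and the increments contribute $O(n^3)$ in total. The interspersed removals (to restore the invariant and, finally, to clear the second block) are beneficial one-bit flips costing $O(n)$ expected iterations each; since ones are added only by the $O(n)$ incrementing flips, the total number of removals over the run is $O(n)$ and they contribute only $O(n^2)$. Adding the two terms yields the claimed $O(n^3)$, and feasibility comes for free because every move used above either reduces the weight or keeps the prefix fixed, so the lexicographic $q$-criterion accepts them while $s^t$ is infeasible.

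The hardest part is the bookkeeping that keeps the two kinds of steps from interfering: I must argue that maintaining the invariant $W(s^t)<w_{e_y}+w_y$, needed so that Corollary~\ref{cor:extenstion} applies and $e_y$ is not removed except by decreasing $y$, does not generate more than $O(n)$ removal events across the whole run, and that each swap genuinely pushes $h_t$ upward rather than merely rearranging ones inside the second block. The threshold monotonicity $w_{e_1}>\cdots>w_{e_n}$ and Inequality~\ref{equ:removing_weights} are the tools that make both the swap argument and the ``number of removals is $O(n)$'' counting work, and checking that these stay valid through infeasible intermediate solutions is the delicate point I would verify most carefully.
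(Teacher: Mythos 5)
Your proposal follows essentially the same route as the paper's proof: the same potential $h_t$, the same pointer $y$ guarded by Corollary~\ref{cor:extenstion}, the same key swap of $e_y$ with position $h_t+1$ costing $O(n^2)$ per increment over at most $k\le n$ increments, and the same initial feasibility phase and final clean-up of the second block, giving $O(n^3)$ overall. The one soft spot is your claim that ones enter the solution only via the designated incrementing flips (RLS\_swap may accept other beneficial insertions, which can in turn re-enable removals of higher-indexed items), but this does not endanger the bound, since the designated swap stays available with probability $\Omega(1/n^2)$ in every iteration regardless of such interference; the paper's own proof glosses over the same bookkeeping.
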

\begin{proof}

    Let RLS\_swap start with a random solution $s^{0}$. We analyse the optimisation process in three main phases. In the first phase, RLS\_swap finds a feasible solution. The second phase is to obtain $h_t = k$ and the third phase is to remove the remaining items from the second block to achieve the optimal solution.
    
    If $W(s^0)\leq C$ the first phase is already complete. Therefore, let $W(s^0)> C$. Using a fitness level argument,  Neumann and Sutton proved that (1+1)~EA, which uses a fitness function with strictly higher priority in weight constraint satisfaction, finds a feasible solution for KP with correlated weights and profits in $O(n^2)$ expected time (Theorem 3 in~\cite{DBLP:conf/ppsn/0001S18}). Their proof also holds for RLS\_swap since the constraint is linear and RLS\_swap is able to do a one-bit flip in $O(n)$ expected time. Hence, RLS\_swap finds a feasible solution $s$ in $O(n^2)$ expected time and completes the first phase.
    
    In the second phase, we analyse the expected time needed to increase the value of $h_t$ to $h_t=k$. To do this, we need to calculate the expected time to find a solution $s$ such that $x_{h_t+1} =1$ and $W(s)\leq w_{e_{h_t+1}} + w_{h_t+1}$. Let $s$ denote the current solution. According to Lemma~\ref{lem:uniq_weight} and~\ref{lem:optimalsolution}, if $W(s)\geq w_{e_{k+1}}+w_{k+1}$, removing any item from the second block improves the benefit function. Moreover, adding any item to the second block decreases the benefit. The probability of a one-bit flip, which removes an item from the second block, is $r/(2n)$ and it happens in expected time $O(2n/r)$. Since there are at most $n$ items to be removed, RLS\_swap obtains $s$ with $W(s)<w_{e_{k+1}}+w_{k+1}$ in $2n\left(\sum_{i=1}^r 1/i\right) = O(n\log{n})$ expected time. Note that after this point, no item can be removed from $s$ with a one-bit flip.
    
    Now we examine the value of $y$ as defined in Equation~\ref{def:y}. Let $y>h_t$. Since there is no accepted one-bit flip that removes an item, we have $W(s)<w_{e_y}+w_y$ and Corollary~\ref{cor:extenstion} holds. Otherwise, we have $y=h_t$ and there are two cases: $y=h_t=k$ or $y=h_t<k$. Let $y=h_t=k$. In this case, $s$ includes of all the items in the first block and there is no item in the second block to be removed. Therefore, RLS\_swap has found $s^*$ which is the optimal solution. In the other case, that $y=h_t<k$, adding $e_{h_t+1}$ is beneficial and there exists at least one acceptable one-bit flip that adds an item to the solution $s$ in expected time $O(n)$. Let $y>h_t$ be the first item added. Since this bit flip is beneficial, we have $W(s)+w_y<w_{e_y}+w_y$ and Corollary~\ref{cor:extenstion} holds. Thus in $O(n)$ we have a solution in which $y>h_t$ and RLS\_swap cannot remove $e_y$. At this stage, any two-bit flip that swaps $e_y$ and $e_{h_t+1}$ improves $h_t$ by one. This swap takes place in $O(n^2)$ expected time. Thus in expected time $O(n^2)$ RLS\_swap increases $h_t$ by one. Since the maximum value of $h_t$ is $k\leq n$, RLS\_swap achieves $h_t = k$ in expected time $O(n^3)$.

    Finally in the third phase, RLS\_swap needs to remove the remaining items from the second block. Note that the first $k$ items are now selected and cannot be removed anymore. Each item can be removed with a one-bit flip which results in $O(n\log n)$ expected time for this phase.
    
    Therefore we can conclude that RLS\_swap finds the optimal solution for the PWT with correlated weights and profits in $O(n^3)$ expected time.
\end{proof}

\subsubsection{GSEMO}
In this section we consider the time performance of GSEMO on the PWT problem with correlated weights and profits. The two objectives used in this algorithm are the weight function and the lexicographical fitness function, denoted by $W(s)$ and $F(s)$, respectively. We say solution $s_1$ weakly dominates solution $s_2$, denoted by $s_1\succeq s_2$, if $W(s_1)\leq W(s_2)\land F(s_1)\geq F(s_2)$. In the case of at least one strict inequality, it is called strong dominance. In our analysis, which is inspired by~\cite{DBLP:conf/ppsn/0001S18}, we use a fitness level argument on the weights of the solution and compute the expected time needed to find at least one Pareto solution. Next, we calculate the time for finding the entire Pareto front. Due to Lemma~\ref{lem:optimalsolution}, we can observe the following corollary that describes the Pareto front structure.

\begin{corollary}\label{cor:Pareto_size}
    The Pareto set corresponding to the Packing While Travelling problem with correlated weights and profits is the solution set $\{s_0,\cdots, s_k=s^*\}$.
\end{corollary}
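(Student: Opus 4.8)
The plan is to determine the Pareto set by first collapsing the whole search space onto the chain $s_0, s_1, \dots, s_n$, and then classifying which members of this chain survive as non-dominated points. The engine behind the collapse is the correlation between weights and profits, which is exactly what Lemma~\ref{lem:optimalsolution} already exploits: among all solutions selecting $i$ items, $s_i$ is built from the $i$ highest profits and the $i$ lowest weights.

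First I would show that for every solution $s$ with $|s|_1 = i$ we have $s_i \succeq s$. Since $s_i$ collects the $i$ smallest weights we get $W(s_i) \le W(s)$, and as the weights are pairwise distinct (equal weights would force equal profits under the correlation, which the model forbids) this inequality is strict whenever $s \neq s_i$. The weight bound immediately yields $q(s_i) \ge q(s)$, and the same highest-profits, lowest-weights property that drives Lemma~\ref{lem:optimalsolution} gives $B(s_i) \ge B(s)$; together these give $F(s_i) \ge F(s)$. Hence $s_i \succeq s$, and in fact $s_i \succ s$ for $s \neq s_i$. Consequently every non-dominated solution must be one of $s_0, \dots, s_n$, and distinct Pareto points correspond to distinct indices.

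Next I would classify the chain. Writing $k = |s^*|$ as in Lemma~\ref{lem:optimalsolution}, all of $s_0, \dots, s_k$ are feasible because $W(s_i) \le W(s_k) \le C$, so their fitnesses compare through $B$ alone; Lemma~\ref{lem:optimalsolution} (using $k \le o$) gives the strict chain $B(s_0) < \dots < B(s_k)$, while the weights strictly increase. Thus within this block a larger index is strictly heavier and a smaller index strictly worse in benefit, so no member dominates another and all of $s_0, \dots, s_k$ are Pareto-optimal. For $i > k$ I would argue that $s_k \succ s_i$. If $s_i$ is infeasible then $q(s_k) = 0 > q(s_i)$ together with $W(s_k) < W(s_i)$ gives strong dominance through the lexicographic priority of feasibility. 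If $s_i$ is feasible (which can only happen when $k = o$ and $i > o$), then $B(s_k) = B(s_o) \ge B(s_i)$ by the decreasing tail of Lemma~\ref{lem:optimalsolution}, and again $W(s_k) < W(s_i)$, so $s_k \succ s_i$. Either way $s_i$ is dominated and therefore not in the Pareto set.

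The main obstacle, and the only place requiring care, is the interface at index $o$: Lemma~\ref{lem:optimalsolution} only guarantees $B(s_o) \ge B(s_{o+1})$ with possible equality, so the argument for $i > k$ must lean on the strict weight inequality $W(s_k) < W(s_i)$ rather than on a strict benefit gap, and must correctly split the feasible and infeasible regimes according to whether $k$ is capped by $o$ or by the capacity $C$. Once the strong-dominance relation $s_k \succ s_i$ is secured in both regimes, combining the three observations yields that the Pareto set is exactly $\{s_0, \dots, s_k = s^*\}$.
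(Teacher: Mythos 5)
Your proposal is correct and follows essentially the same route as the paper's proof: collapse every solution of cardinality $i$ onto $s_i$ via the highest-profits/lowest-weights property, observe that $s_0,\dots,s_k$ are mutually non-dominating because benefit and weight both increase along the chain, and show $s_k$ strongly dominates every $s_i$ with $i>k$. You are somewhat more careful than the paper at the boundary --- splitting the $i>k$ case by feasibility and handling the possible tie $B(s_o)=B(s_{o+1})$ via the strict weight inequality --- but the underlying argument is the same.
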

\begin{proof}
As it is explained in the proof of Lemma~\ref{lem:uniq_weight}, $s_i$, $i\leq n$, dominates all the solutions with size $i$. On the other hand, we have $B(s_0)\leq\cdots\leq B(s_k)$ while $W(s_0)\leq\cdots\leq W(s_k)$, which implies that $\{s_0,\cdots,s_k\}$ do not dominate each other. Furthermore, $s_k$ dominates every solution $s_i$, $i> k$, since it has a higher fitness value and less weight. Thus, there exists no solution dominating $\{s_0,\cdots, s_k=s^*\}$, which completes the proof.
\end{proof}

The following theorem proves that the expected time for GSEMO to find the entire Pareto front of the PWT problem with correlated weights and profits is $O(n^3)$.

\begin{theorem}
    GSEMO finds all the non-dominated solutions of the Packing While Travelling problem with correlated weights and profits in $O(n^3)$ expected time.
\end{theorem}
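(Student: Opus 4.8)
My plan is to reduce the argument to a clean chain-building scheme that mirrors the three-phase structure of the proof of Theorem~\ref{thm:RLSswap} but exploits the special selection operator of GSEMO. The starting point is a structural observation about the population. Because weights and profits are correlated, any two solutions of the \emph{same} cardinality $i$ are comparable under $\succeq$: the lighter one carries the more profitable items and incurs a smaller travel time, so it has strictly larger $B$ and is at least as feasible, hence it strictly dominates the heavier one. Consequently each level $P_i$ holds at most one solution at any time, $|P|\le n+1$, and the level-choosing step selects any occupied level with probability at least $1/(n+1)$. Moreover, by Corollary~\ref{cor:Pareto_size} the Pareto set is the chain $\{s_0,\dots,s_k\}$ of mutually non-dominated points, so once some $s_i$ ($0\le i\le k$) enters $P$ it is never deleted and, dominating every other cardinality-$i$ solution, it is the unique (hence selected) representative of its level.

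First I would bound the time to obtain the \emph{first} Pareto-optimal point. The natural target is $s_0=0^n$, which has weight $0$ and is therefore non-dominated forever once found. I would track the minimum cardinality $c_{\min}$ present in $P$: selecting its unique representative and flipping a single one-bit off is an event of probability $\Omega(1/n^2)$ that yields an offspring of strictly smaller weight and cardinality, driving $c_{\min}$ down to $0$ in $O(n^2)$ expected time, in the same spirit as the feasibility argument (Theorem~3 in~\cite{DBLP:conf/ppsn/0001S18}) already reused in the proof of Theorem~\ref{thm:RLSswap}.

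Once a Pareto point lies in $P$, the second part of the argument propagates along the chain. The key geometric fact is that $s_i$ and $s_{i+1}$ differ in exactly the single boundary bit $i+1$, so each neighbour on the front is reachable from the previous one by a one-bit mutation of probability at least $\frac{1}{n}\left(1-\frac1n\right)^{n-1}\ge \frac{1}{en}$. Whenever the set of discovered Pareto points is not yet all of $\{s_0,\dots,s_k\}$, then --- since these indices form a contiguous interval --- there is always a present member $s_j$ lying one such flip away from a missing neighbour $s_{j\pm1}$; by the structural observation $s_j$ is the selected representative of its level with probability $\ge 1/(n+1)$, so a new Pareto point appears in $O(n^2)$ expected time. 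As the front has $k+1\le n+1$ points and each newly created one is Pareto-optimal and hence permanent, filling the whole front costs $O(n\cdot n^2)=O(n^3)$ expected time, which dominates the first phase and yields the claimed bound.

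The step I expect to be the main obstacle is the first phase, namely proving that the best-$B$-per-level selection cannot trap the search before any Pareto point is found. The delicate case is an offspring obtained by trimming the minimum-cardinality solution being rejected because some lighter many-item incumbent dominates it; ruling this out (or showing that the potential still decreases whenever it occurs) needs a careful use of the correlated-weights structure together with the fact --- already used above --- that the canonical prefix solution $s_{|z|_1}$ is the lightest member of its cardinality class. Once the eventual reduction of $c_{\min}$ to $0$ is secured, the remaining chain-building phase is routine.
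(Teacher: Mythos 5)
Your second phase is essentially the paper's argument and is sound: once some $s_i$ enters $P$ it is permanent, it dominates every other cardinality-$i$ solution and is therefore the selected representative of its class with probability at least $1/(n+1)$, and a single targeted bit flip creates a missing neighbour $s_{i\pm1}$, giving $O(n^2)$ per new front point and $O(n^3)$ overall. The problems lie in your opening structural claim and in your first phase. The claim that any two solutions of equal cardinality are comparable under $\succeq$ (so that $|P_i|\le 1$) is false for correlated weights: take $p=(10,3,2,1)$, $w=(1,2,3,10)$; then $\{e_1,e_4\}$ has weight $11$ and profit $11$ while $\{e_2,e_3\}$ has weight $5$ and profit $5$, so the lighter solution is the \emph{less} profitable one, and for small $R$ the two are mutually non-dominated and can coexist in $P$. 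Indeed, the entire reason the paper equips GSEMO with the $\argmax\{B(x)\mid x\in P_j\}$ selection is that $P_j$ can be exponentially large. This does not hurt your phase two (there $s_i$ really is the unique survivor of its class), but it invalidates the ``unique, hence selected, representative'' step on which your phase one relies.

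More importantly, the obstacle you yourself flag in the first phase is real and unresolved, and it is exactly where the argument breaks: the offspring obtained by deleting a one-bit from the minimum-cardinality solution can be strongly dominated by a member of $P$ with \emph{more} items but \emph{less} weight (with positive, correlated weights a larger set can still be lighter), in which case it is rejected and $c_{\min}$ does not decrease; nothing you write excludes this, so the claimed $\Omega(1/n^2)$ drift on $c_{\min}$ is unproven. (Even granting it, $n$ decrements at $O(n^2)$ expected steps each gives $O(n^3)$ for this phase, not the $O(n^2)$ you state, though that would not change the final bound.) The paper's fix is to run the fitness-level argument on \emph{weight} rather than cardinality: define $A_0=\{s_0\}$ and $A_i=\{s\mid W(s_{i-1})<W(s)\le W(s_i)\}$, select the best-fitness solution $s$ in the lowest occupied level $A_u$, and remove from it an item $e_x$ with $x\ge u$; the resulting offspring has strictly smaller weight than every solution in $P$, hence cannot be dominated, is accepted, and strictly lowers the occupied level. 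This yields $s_0$ in $O(n^3)$ expected time and closes the gap; re-basing your first phase on these weight levels is the required repair.
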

\begin{proof}
    The proof consists of two phases. In the first phase, GSEMO finds the Pareto solution $\{0\}^n$. In the second phase, GSEMO finds other Pareto solutions based on the assumption of having at least one optimal solution.
    
    Note that according to the definition of $F(s)$, an infeasible solution with less constraint violation always dominates the other ones, even in this two-objective space. Hence, while GSEMO has not achieved a feasible solution, the size of its population remains one. Therefore, it behaves exactly the same as (1+1)~EA until it finds a feasible solution, which is argued in first phase of Theorem~\ref{thm:RLSswap}, and takes $O(n^2)$ expected time. In the rest of the analysis of the first phase, we assume that GSEMO has found a feasible solution.
    
    To analyse the first phase, we define $n$ fitness levels $A_i$, $0\leq i\leq n-1$, based on the weight objective $W(s)$ as follows:
    \begin{align}\nonumber
        &A_i = \left\{s\mid s=s_0\right\}&&i=0\\\nonumber
        &A_i = \left\{s\mid W(s_{i-1})< W(s)\leq W(s_i)\right\}&&1\leq i\leq n\text{.}
    \end{align}
    According to the correlation of the weights, this definition guarantees that if a solution $s$ belongs to level $A_i$, $1\leq i\leq n$, then $s$ includes at least one of the items $e_j$ where $j\geq i$. Moreover, removing $e_j$ from $s$ moves it to a lower level. Now assume that $P$ denotes the population in step $t$ of GSEMO. Let $A_u$ denote the lowest level that has been achieved until step $t$ and $s\in A_u$ has the highest fitness among the solutions in level $A_u$. GSEMO chooses this solution by choosing $|s|_1$ from $I$, which happens with the probability of $1/|I|\geq 1/n$. Furthermore, there exists item $e_x$ in $s$ such that $x \geq u$ and removing it produces a solution in a lower level. This solution will be accepted since it has the lowest weight. The one-bit flip that removes $e_x$ from $s$ happens with the probability of $1/(en)$. Thus, GSEMO reduces $u$ with probability of $1/(en^2)$ in step $t$. In other words, GSEMO reduces $u$, at least by one, in expected time $O(n^2)$. On the other hand, $s_0$, which is the only solution of level $A_0$, is a Pareto solution since it has the lowest possible weight. Thus, if the algorithm achieves $A_0$ then the first phase is accomplished. Since the maximum possible value of $u$ is $n$ and $u$ is reduced by one in every $O(n^2)$ expected iterations, GSEMO finishes the first phase in expected time of $O(n^3)$.
    
    In the second phase, we assume $s_i\in P$, $i\leq k$ exists such that $s_i$ is Pareto solution by Corollary~\ref{cor:Pareto_size}, and either $s_{i-1}$ or $s_{i+1}$, which are also Pareto solutions, do not exist in $P$. Otherwise, the second phase is already finished. Adding $e_{i+1}$ to or removing $e_{i}$ from $s_i$ results in $s_{i+1}$ or $s_{i-1}$, respectively. For such a step, the algorithm must choose $i\in I$ and flip the correct bit, which happens with the probability $1/n$ and $1/(en)$, respectively. Hence, the algorithm finds a new Pareto solution from $s_i$ in $O(n^2)$ expected iterations. Based on Corollary~\ref{cor:Pareto_size}, the size of the Pareto set is $k+1\leq n$. Therefore, GSEMO finds all the $k+1$ Pareto solutions of the PWT problem with correlated weights and profits in $O(n^3)$ expected time.
\end{proof}

\subsection{Uniform Weights}\label{sub:uniform}
In this section, we analyse the performance of (1+1)~EA on another version of the Packing While Travelling problem. Here we assume that weights of all the items are one and the profits are arbitrary. Similar to the previous instances, we assume items $\{e_1,\cdots,e_n\}$ are indexed such that $p_1\geq\cdots\geq p_n$. Note that the results of Lemmata~\ref{lem:uniq_weight} and~\ref{lem:optimalsolution} also holds for the uniform weights. Moreover, since correlated weights and profits are actually the more general version of the uniform weights, the results for GSEMO and RLS\_swap also holds.

The analysis of (1+1)~EA is based on the following Lemma which proves the existence of a set of one-bit flips and two-bit flips which transform an arbitrary solution to the optimal solution. 


\begin{lemma}\label{lem:set_of_bitflips}
	Let $W(s^*)$ be the weight of the optimal solution and the current solution $s$ is feasible. There exists a set of one-bit flips and two-bit flips that transform $s$ to an optimal solution if they happen in any order.
	
\end{lemma}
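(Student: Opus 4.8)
The plan is to construct the required set of flips explicitly by comparing the feasible solution $s$ with the optimal solution $s^* = \langle 1^k, 0^{n-k}\rangle$ guaranteed by Lemma~\ref{lem:optimalsolution}, and then to verify that every flip in the set is accepted by the (1+1)~EA irrespective of the order in which the flips are applied. First I would classify the positions on which $s$ and $s^*$ differ: let $l$ be the number of zeros among the first $k$ bits (items that $s^*$ contains but $s$ misses) and $r$ the number of ones among the last $n-k$ bits (items $s$ contains but $s^*$ does not). I would pair $\min\{l,r\}$ of the missing first-block items with the superfluous second-block items and turn each pair into a two-bit flip (a swap); the remaining $|l-r|$ positions then become one-bit flips, namely additions in the first block when $l>r$ and removals in the second block when $r>l$. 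Because all these positions are pairwise disjoint, applying the whole set in any order deterministically yields $s^*$, so the only remaining task is to show that each individual flip is accepted whenever it fires.

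The crucial feature of uniform weights is that $T(s)$ depends only on the cardinality $|s|$, so a swap keeps the weight fixed and changes the benefit by exactly $p_i-p_j$ with $i\le k< j$; since $p_i\ge p_j$ this is nonnegative, hence every swap is accepted unconditionally, independently of the current state. For the one-bit flips I would track the weight: since swaps do not move it and all leftover one-bit flips point in the same direction, the cardinality moves monotonically toward $k$ and stays in the interval between $|s|$ and $k$ throughout. In the case $l>r$ the weight never exceeds $k-1$ before an addition, and I would combine Lemma~\ref{lem:uniq_weight} with the monotonicity $w_{e_1}\ge\cdots\ge w_{e_n}$ and the fact that $B(s_{k-1})<B(s_k)$ to obtain $w_{e_i}\ge w_{e_k}>k-1$ for every first-block item $e_i$, so each addition is beneficial. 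In the case $r>l$ the current solution has cardinality above $k$; I would first note that this forces the capacity constraint to be non-binding at the optimum (otherwise feasibility would cap $|s|$ at $k=\lfloor C\rfloor$ and give $l\ge r$), so $k=o$ where $o$ is the index from the proof of Lemma~\ref{lem:optimalsolution}, and its defining inequality yields $w_{e_{k+1}}<k$; together with $w_{e_j}+w_j\le w_{e_{k+1}}+w_{k+1}<k+1$ and a weight that is always at least $k+1$ before a removal, Lemma~\ref{lem:uniq_weight} makes each removal beneficial.

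The main obstacle I anticipate is precisely the order-independence of acceptance for the one-bit flips: a single flip need only be beneficial relative to the current solution, which itself depends on which other flips have already fired. The argument above resolves this by reducing the acceptance condition of each one-bit flip to a bound on the current weight alone (weight $\le k-1$ for additions, weight $\ge k+1$ for removals), and by observing that this bound holds throughout the process regardless of order, because swaps are weight-neutral and the surviving single flips are all of one type. Establishing the two threshold estimates $w_{e_k}>k-1$ and $w_{e_{k+1}}<k$ from Lemmata~\ref{lem:uniq_weight} and~\ref{lem:optimalsolution}, and handling the boundary between the binding and non-binding constraint regimes, is where the real work lies; the combinatorial bookkeeping of pairing up the flips is routine.
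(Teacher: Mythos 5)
Your proposal is correct and follows essentially the same route as the paper's proof: pair the first-block zeros with second-block ones into weight-neutral swaps (accepted since $p_i\geq p_j$), handle the surplus with one-bit flips all pointing in one direction, and use the resulting monotone weight bound together with Lemmata~\ref{lem:uniq_weight} and~\ref{lem:optimalsolution} to get order-independent acceptance. You actually spell out the threshold estimates ($w_{e_k}>k-1$, $w_{e_{k+1}}<k$) and the binding/non-binding constraint case more explicitly than the paper does, but the underlying decomposition and argument are the same.
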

\begin{proof}
	Assume that $W(s)\leq W(s^*)$. This implies that $i=|s|_1\leq k$, where $i$ is the number of selected items in $s$. In this case, any two-bit flips that swap a one bit from the second block and a zero bit from the first block will be accepted by the algorithm. If there is no one bit in the second block, then all the one-bit flips that change a zero bit in the first block are accepted and this set leads $s$ to the optimal solution $s^*$. Note that all defined one-bit flips will be accepted (Lemma~\ref{lem:uniq_weight} and Lemma~\ref{lem:optimalsolution}) since $i\leq k$ and the weights are one. Hence, it is not necessary for the bit flips to happen in a special order. The other case, that $C\geq W(s)>W(s^*)$, is similar to the first one. Any two-bit flips that swap a zero bits in the first block with a one bit in the second block and any one-bit flips that remove ones from the second block are accepted by the algorithm. Thus, in this case, there also exists a set of one-bit flips and two-bit flips that, if they occure in any order, transform $s$ into the optimal solution. 
\end{proof}
Here we present some more definitions that help us with analysis of the performance of (1+1)~EA on the PWT problem.
Assume $M$ is the set of $m_1$ one-bit flips and $m_2$ two-bit flips that transform the current solution $s$ to an optimal solution and $|M|= m_1+m_2$. Moreover, let $g(s)=B(s^*)-B(s)$ be the difference between the benefit of $s$ and the optimal solution. Therefore, we can denote the contribution of one-bit flips and two-bit flips in $g(s)$ by $g_1(s)$ and $g_2(s)$, respectively, such that $g(s) = g_1(s)+g_2(s)$. In the next theorem, we calculate the expected time for (1+1)~EA to find the optimal solution of the PWT problem with unifrom weights.
\begin{theorem}
	The expected time for (1+1)~EA to find the optimal solution for the Packing While Travelling problem with uniform weights is $O(n^2\max\{\log{n},\log {p_{\max}}\})$.
\end{theorem}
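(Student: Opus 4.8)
The plan is to measure progress by the benefit gap $g(s)=B(s^*)-B(s)$ and to drive it to $0$ with a multiplicative drift argument, bracketed by a cheap feasibility phase and a cheap endgame. First I would reach feasibility exactly as in the first phase of Theorem~\ref{thm:RLSswap}: the weight constraint is linear and $(1+1)$~EA gives strict priority to reducing constraint violation, so a feasible solution appears in $O(n^2)$ expected time, and thereafter the search never leaves the feasible region (an infeasible offspring is always rejected). On feasible solutions acceptance is exactly $B(s')\ge B(s)$, so from this point $g$ is non-increasing.

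The heart of the argument is a single-step drift bound $\mathbb{E}[g(s)-g(s')\mid s]\ge g(s)/(en^2)$. Fix a feasible $s$ with $i=|s|_1\le k$ and let $M=M_1\cup M_2$ be the set of one-bit flips (filling first-block zeros) and two-bit swaps from Lemma~\ref{lem:set_of_bitflips} that carry $s$ to $s^*$ in any order; for a flip $\phi$ put $\Delta_\phi(s)=B(s\oplus\phi)-B(s)\ge 0$, the sign being precisely the any-order acceptance of Lemma~\ref{lem:set_of_bitflips}. I would then compare $\sum_{\phi\in M}\Delta_\phi(s)$ with $g(s)$ directly. Writing the travel-time contribution to the benefit as $f(w)=Rd/(v_{\max}-\nu w)$ with $w$ the number of selected items, the swaps are weight-preserving, so the profit parts telescope and the profit contribution of $\sum_{\phi\in M}\Delta_\phi(s)$ equals $P(s^*)-P(s)$ exactly, matching the profit part of $g(s)$. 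The two travel-time parts differ only in that each of the $k-i$ one-bit additions is charged the marginal cost $f(i+1)-f(i)$ at the current weight, whereas $g(s)$ contains $f(k)-f(i)=\sum_{w=i}^{k-1}\big(f(w+1)-f(w)\big)$; since $f$ is convex in $w$, every summand is at least $f(i+1)-f(i)$, giving
\begin{align}\nonumber
\sum_{\phi\in M}\Delta_\phi(s)-g(s)=\sum_{w=i}^{k-1}\Big[\big(f(w+1)-f(w)\big)-\big(f(i+1)-f(i)\big)\Big]\ge 0 .
\end{align}
A designated one-bit flip occurs with probability at least $1/(en)$ and a designated two-bit flip with probability at least $1/(en^2)$, so $\mathbb{E}[g(s)-g(s')\mid s]\ge \frac{1}{en^2}\sum_{\phi\in M}\Delta_\phi(s)\ge g(s)/(en^2)$. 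The case $i>k$, possible only when the capacity does not bind, is symmetric with removals in place of additions and the same convexity supplying the bound.

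With drift coefficient $\delta=1/(en^2)$ and initial value $g_0\le P(s^*)+R\,T(s)=O(np_{\max})$ (the travel-time term is bounded by a fixed instance constant), the multiplicative drift theorem shows that $g$ drops below $1$ within $O(n^2\ln g_0)=O(n^2\max\{\log n,\log p_{\max}\})$ expected steps. The endgame then needs no lower bound on the smallest positive value of $g$: since $s_i$ is optimal among all solutions of cardinality $i$ and $B(s_i)\le B(s^*)$, we have $g(s)\ge B(s_i)-B(s)=P(s_i)-P(s)$, an integer, so $g(s)<1$ forces $s=s_i$. From $s_i$ every step that would increase $g$ is rejected, so the walk stays on the chain $s_0,\dots,s_k=s^*$; adding $e_{i+1}$ by a one-bit flip (probability $\ge 1/(en)$, accepted while $i<k$ by Lemmata~\ref{lem:uniq_weight} and~\ref{lem:optimalsolution}) strictly increases the cardinality toward $k$, and at most $n$ such advances finish in $O(n^2)$ expected time. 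Summing the three phases yields the claimed $O(n^2\max\{\log n,\log p_{\max}\})$ bound.

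I expect the drift estimate to be the only real obstacle: because the benefit is non-linear the gain of an addition depends on the current weight, so the naive ``average contribution'' argument that works for linear functions can fail, and it is the convexity of the travel-time term in the item count that restores the summed inequality $\sum_{\phi\in M}\Delta_\phi(s)\ge g(s)$. The second delicate point is that $g$ is real-valued, so a plain multiplicative drift to $0$ would incur an uncontrolled $\log(1/g_{\min})$ factor; isolating the integral profit gap and handing the final sub-unit gap to the monotone endgame is what keeps the range factor at $\max\{\log n,\log p_{\max}\}$.
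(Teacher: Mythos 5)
Your proof is correct and follows the same skeleton as the paper's (feasibility in $O(n^2)$, then multiplicative drift on the gap $g(s)=B(s^*)-B(s)$ over the flip set of Lemma~\ref{lem:set_of_bitflips}), but you handle the two delicate steps differently, and in both places your version is the more rigorous one. For the drift, the paper simply declares a decomposition $g(s)=g_1(s)+g_2(s)$ into one-bit and two-bit ``contributions'' and case-splits on which half dominates, obtaining $E[\Delta_t]\ge g(s)/(4en^2)$; this tacitly assumes that the per-flip gains evaluated at $s$ sum to exactly $g(s)$, which is false in general for a non-linear benefit. Your convexity argument for the travel-time term $f(w)=Rd/(v_{\max}-\nu w)$ proves the inequality $\sum_{\phi\in M}\Delta_\phi(s)\ge g(s)$ that the paper's decomposition actually needs, and it also removes the case split, giving the slightly better constant $g(s)/(en^2)$. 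For the endgame, the paper lower-bounds the smallest positive gap by $n^{-q}$ (by examining the three possible last flips and treating $R,d,v_{\min},v_{\max}$ as constants) and feeds $x_{\min}=n^{-q}$ into multiplicative drift; you instead stop the drift at $g<1$, use integrality of the profits to conclude that the current solution is benefit-equivalent to some $s_i$, and finish with an $O(n^2)$ chain walk. Your route avoids the $x_{\min}$ estimate entirely, at the cost of the extra (cheap) endgame phase. One small nit: when profits may repeat, $g(s)<1$ forces only $B(s)=B(s_i)$ rather than $s=s_i$ literally, but your chain argument survives verbatim with ``benefit-equivalent to $s_i$'' since an item of profit $p_{i+1}$ outside $s$ always exists; also, in the monotone phase you should note that accepted moves can only land on benefit values $B(s_j)\ge B(s_i)$, of which there are at most $n+1$, so the $O(n^2)$ count still stands.
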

\begin{proof}
	As mentioned in the proof of Theorem~\ref{thm:RLSswap}, it is proven in Theorem 3 in~\cite{DBLP:conf/ppsn/0001S18} that (1+1)~EA finds a feasible solution in $O(n^2)$ expected time. Thus, we assume that (1+1)~EA has already found a feasible solution $s$. Let $\Delta_t = g(s)-g(s^\prime) = B(s^\prime)-B(s)$ denote the improvement of (1+1)~EA in iteration $t$ which transforms $s$ to $s^\prime$. We partition the proof into two cases. Firstly, let the overall contribution of one-bit flips in $g(s)$ be more than the total improvement that could be achieved by two-bit flips. Hence, we have $g_1(s)\geq \frac{g(s)}{2}$. Since there are $m_1$ one-bit flips in $M$, each of them happens with the probability of $\frac{m_1}{en}$ and improves the solution in average by $\frac{g_1(s)}{m_1}$. Thus, in this case we have : $$E[\Delta_t]\geq \frac{g_1(s)}{m_1}\cdot\frac{m_1}{en}\geq\frac{g_1(s)}{en}\geq \frac{g(s)}{2en}\text{.}$$
	
	In the second case, the sum of improvements by two-bit flips in $M$ is more than the total improvement of one-bit flips and we have $g_2(s)\geq \frac{g(s)}{2}$. Here, the average improvement of each two-bit flip is $\frac{g_2(s)}{m_2}$ and each takes place with a probability of $\frac{m_2}{2en^2}$. Therefore, we have: $$E[\Delta_t]\geq \frac{g_2(s)}{m_2}\cdot\frac{m_2}{2en^2}\geq\frac{g_2(s)}{2en^2}\geq \frac{g(s)}{4en^2}\text{.}$$
	We can conclude that the expected improvement at step $t$ is at least $E[\Delta_t] \geq\frac{g(s)}{4en^2}$. On the other hand, $B(s^*)\leq n\cdot p_{\max}$ is the maximum possible benefit value, ignoring the cost function. To use the multiplicative drift, it is only necessary to calculate the minimum possible amount of $g(s)$.
	
	Let $s^l$ be the last solution that turned into an optimal solution with a bit flip. We need to find the minimum of $B(s^*)-B(s^l)$ which happens when $B(s^l)$ is maximized. There are three possibilities  in which $s^l$ is the closest solution to the optimal solution. If the last bit flip is a two-bit flip, to maximize $B(s^l)$, $s^l = s_{k-1} \cup e_{k+1}$. In this case, we have $g(s^l) = p_k-p_{k+1}\geq 1$, since all the profits are integers.
	If the last bit flip is a one-bit and it adds an item to $s^l$, then the maximum benefit of $s^l$ is achieved when $s^l=s_{k-1}$. Therefore, we have $g(s^l) =p_k - \frac{Rd\nu}{(v_{\max}-\nu(k-1))(v_{\max}-\nu k)}$. Considering $R,d,v_{\min} \text{ and } v_{\max}$ as constants, $g(s) = O(n^{-q})$ for some constant $q\geq1$. The same result holds for the third case where the final bit flip removes $e_{k+1}$ from $s^l=s_{k+1}$. 
	
	Finally, using the multiplicative drift with $X_0 = n\cdot p_{\max}$, $x_{\min} = n^{-q}$ and $\delta = \frac{1}{4en^2}$, the expected first hitting time $T$ that (1+1)~EA finds the optimal solution for the PWT problem with uniform weights is:
	$$E[T|X_0]\leq 4en^2\ln(n\cdot p_{\max} \cdot n^q)=O(n^2\max\{\log{n},\log {p_{\max}}\})\text{.}$$
\end{proof}

\section{Experiments}\label{sec:practical}

In this section, we experimentally investigate the performance of our algorithms. Our goal is to complement the theoretical analysis to gain additional insight into the behaviour of the algorithms during the optimisation process. Moreover, by analysing the running time of our algorithms on random instances with different sizes, we present a clearer view of how they perform.

 In the previous section, we showed that the classical RLS cannot find the optimal solution by doing only one-bit flips and it is essential to do a two-bit flip to escape local optima. However, what happens if the classical RLS is enhaced with a population? To answer this question, we also consider another two multi-objective algorithms called SEMO and SEMO\_swap. They are different from GSEMO (Algorithm~\ref{alg:GSEMO}) only in the mutation step. The mutation step in SEMO\_swap is the same as Algorithm~\ref{alg:RLSswap}. SEMO, on the other hand, only chooses one bit, uniformly at random, and flips it. Our experiments show that SEMO avoids the classical RLS weakness by using the population and performs even better than the other multi-objective algorithms.

\subsection{Benchmarking and Experimental Setting}
To compare the practical performance of different algorithms in different types of PWT, we used thirty different instances where each instance consists of 300 items. Each instance of size $n$ for the correlated PWT is generated by choosing $n$ integer profits uniformly at random within [1,1000] and assigning them to $p_1,\cdots,p_n$ in descending order. Similarly, $n$ uniformly random integers within the same interval are generated and sorted in ascending order as the weights $w_1,\cdots,w_n$. Hence, for any item $e_i,e_j : i<j$ we have $p_i\geq p_i$ and $w_i\leq w_j$. For the instances with uniform weights, we use the same profits as correlated instances but change the weights to one. We set the constant values in all the instances as follows: $n=300$, $d = 50$, $R=70$, $v_{\max} = 1$ and $v_{\min} = 0.1$. $C=8000$ is the capacity chosen for correlated instances. To calculate the proper capacity for uniform instances, we use the average of the maximum number of items with correlated weights that fit in $C=8000$, which results in $C=72$ for the uniform instances. Furthermore, for all the experiments, algorithms start with the zero solution.

We use these instances in the first experiment, in which we show how the algorithms converge to the optimal solution for the correlated and uniform instances. We run each algorithm for all thirty instances, record the best found solution in each generation and normalise its benefit value to the interval [0,1] with respect to the optimal solution for each instance. We plot the average of normalised values as the success rate for each algorithm (Figure~\ref{fig:convergence}). Hence, this value is one when an algorithm finds the optimal solution for all thirty different instances.
\begin{figure}[t!]
    \begin{subfigure}{0.47\textwidth}
        \centering
        \includegraphics[width=\textwidth]{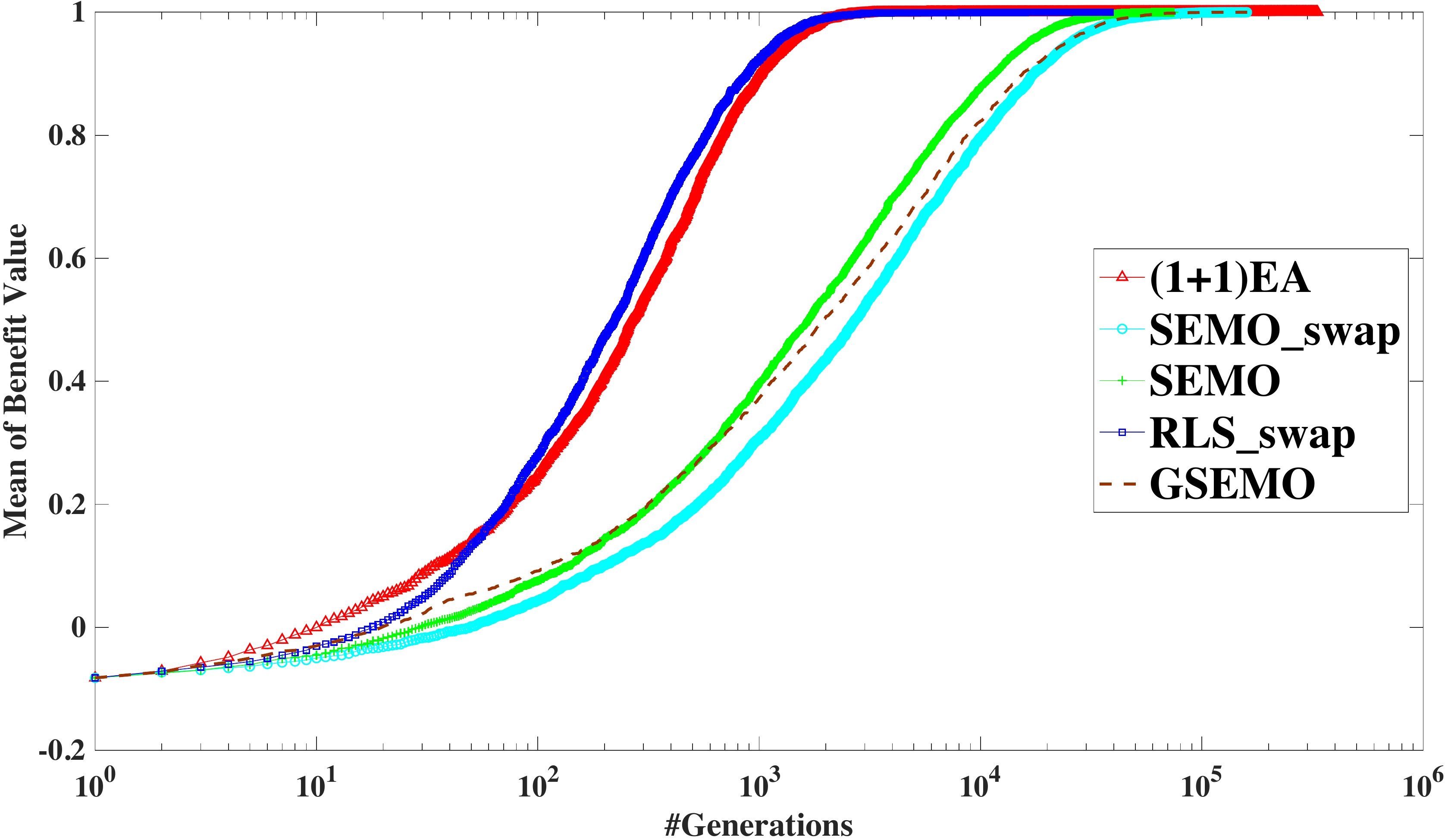}
        
        \caption[]%
        {\small Correlated weights and profits}    
        \label{fig:corr_conv}
    \end{subfigure}
    \begin{subfigure}{0.47\textwidth}  
        \includegraphics[width=\textwidth]{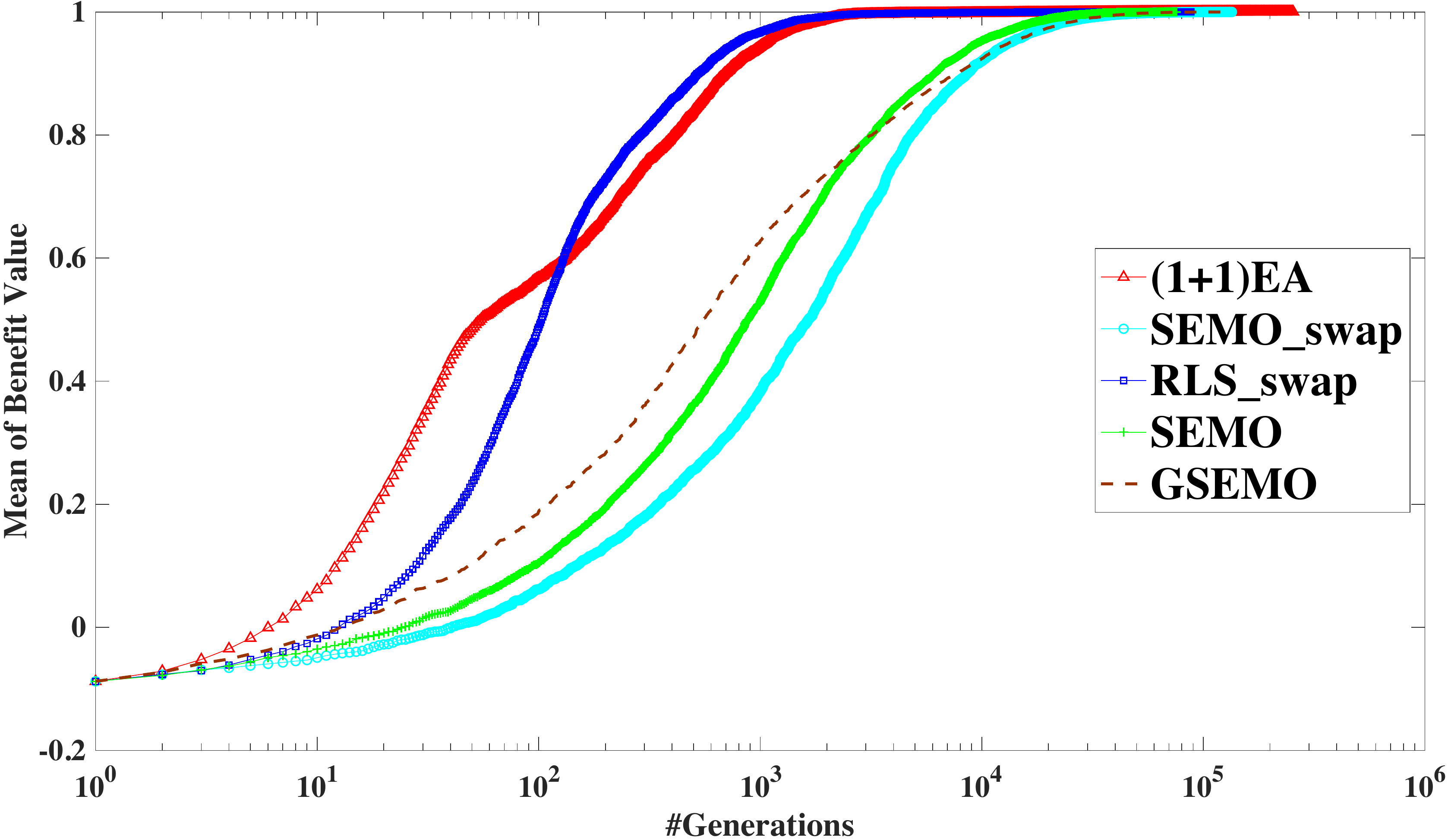}
        
        \caption[]%
        {\small Uniform weights}    
        \label{fig:unif_conv}
    \end{subfigure}
    \caption[]
    {\small The average best solution in each generation for RLS\_swap, (1+1)~EA, GSEMO, SEMO and SEMO\_swap in thirty different instances.} 
        \label{fig:convergence}
    \vspace{-0.15cm}
\end{figure}

In the last experiments, in which we consider the optimisation time for algorithms, we run each algorithm on instances with seven different sizes, $n=100,200,500,1000,2000,5000,10000$. For each $n$, we have thirty different instances which are created with the same constants as the previous ones. For each algorithm, we record the average number of evaluations to find the optimal solution for the thirty instances with the same $n$ as its optimisation time. Hence, for each algorithm, we have seven points to estimate the performance (Figure~\ref{fig:all_times}). We only do this analysis with the correlated instances.

In our results for (1+1)~EA and GSEMO, we only count the generations that an actual bit flip has happened in mutation steps. The same approach has been used in~\cite{DBLP:conf/gecco/Doerr018}. This causes the analysis to be fair since RLS\_swap guarantees to create a new solution in each mutation step but (1+1)~EA and GSEMO do not flip any of the bits with the probability of $1/e$ (36.7\% of times).
\subsection{Analysis}
In this section, we analyse the performance of the algorithms based on the experimental results. Figure~\ref{fig:convergence} illustrates how the algorithms converge to the optimal solutions for all the instances on average. In both types of the instances, it can be observed that (1+1)~EA and RLS get close to the optimal solutions much faster than the multi-objective algorithms, which demonstrates that using population slows down the convergence rate significantly. (1+1)~EA, however, has major problems in finding the optimal solutions when it is close to the optimum and only a few bit flips are needed. The reason is that (1+1)~EA is able to flip more than one bit. Hence, it is likely to improve the benefit value, but increase the hamming distance between the current solution and the optimal solution in the same time. In other words, (1+1)~EA is able to improve the benefit function while the improved solution needs more bit flips to reach the optimal solution than before. Note that Figure~\ref{fig:convergence} is plotted with a logarithmic x scale to make it easier to distinguish the difference betweem the first $10^5$ generations. While the other algorithms almost achieve the optimal solution in the first $10^5$ generations, (1+1)~EA performs two times worse than the others.

Looking into the multi-objective algorithms, Figure~\ref{fig:convergence} illustrates that GSEMO and SEMO\_swap behave almost identically while SEMO outperforms both. This shows that although SEMO can flip only one bit in each generation, the population enables the algorithm to bypass the local optima and find a better solution. In other words, using population and dominance concepts is another method to escape local optima with only one-bit flips. The reason, as described in the proof of Theorem 3.7, is that one-bit flips can always reduce the weight of the solution. Using weight as an objective in population-based multi-objective algorithms makes it possible for SEMO to obtain the zero solution, which is a Pareto optimal solution, by using only one-bit flips. From that point, the algorithm can gradually find all the other Pareto optimal solutions, including the general optimal solution, by adding the correct item to the previous optimal solution.
Generally speaking, RLS\_swap and SEMO, which enhance versions of the classical RLS to avoid the local optima, perform better for this Packing While Travelling problem variation.
\begin{figure}[t!]
        \centering
        \includegraphics[width=0.47\textwidth]{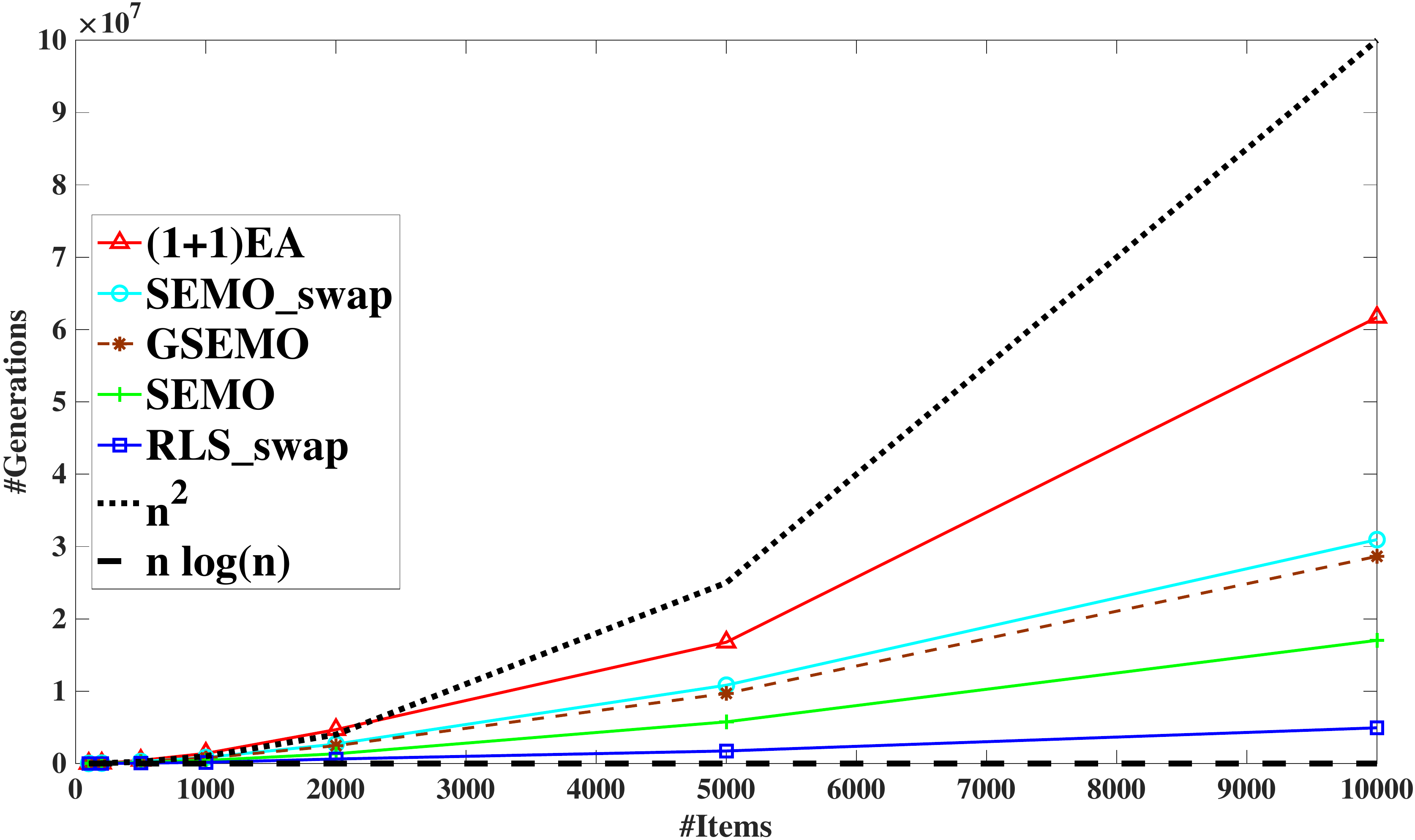}
        \caption[]%
        {\small Comparison between the running time of the algorithms according to the experimental results.}    
        \label{fig:all_times}
\end{figure}

The results of the final experiment are presented in Figure~\ref{fig:all_times}. We run each algorithm for instances with seven different sizes, and for each specific size, we have thirty different instances. Figure~\ref{fig:all_times} shows the average of running times for each instance size. It validates the results in Figure~\ref{fig:convergence}, in which RLS\_swap has a better performance than the other algorithms while multi-objective algorithms outperform (1+1)~EA. This data can also be used to give an insight into the expected optimisation time for each algorithm. Figure~\ref{fig:all_times} presents the data for $n^2$ and $n\log{n}$ expressions which suggests that the order of magnitude for the running times for these algorithms on random instances is close to $n^2$ or $n\log{n}$.

\section{Conclusion}\label{sec:conclusion}
Evolutionary algorithms are known as general problem solvers which can provide good quality solutions to many problems. While these algorithms have been thoroughly studied in solving linear functions, their performance on non-linear problems is not clear. In this paper, we study the performance of three base-line EAs, from a theoretical viewpoint,  on the Packing While Travelling problem, which is also known as a non-linear knapsack problem. We prove that RLS\_swap and GSEMO find the optimal solution in $O(n^3)$ expected time for instances with correlated weights and profits. In addition, we show that (1+1)~EA finds the optimal solution for instances with uniform weights in $O(n^2\log(\max\{n,p_{\max}\}))$, where $p_{\max}$ is the highest profit of the given items. We also empirically investigate these algorithms and, based on our investigations, we conjecture an upper bound of $O(n^2)$.
We aim to extend our results for the PWT with more than two cities with correlated items and items with uniform weights. However, the case in which the items are not correlated and are correlated in each city separately might be as hard as the general case.

\section*{Acknowledgements}
This work has been supported by the Australian Research Council through grant DP160102401. Furthermore, Frank Neumann has been supported by the Alexander von Humboldt Foundation through a Humboldt Fellowship for Experienced Researchers.

\bibliographystyle{unsrt}  
\bibliography{references}  






\end{document}